\documentclass{ieeeaccess}
\usepackage{cite}
\usepackage{amsmath,amssymb,amsfonts,amsthm,mathtools}
\usepackage[ruled,vlined]{algorithm2e}
\usepackage{graphicx}
\usepackage{float}
\usepackage{afterpage}
\usepackage[caption=false,font=footnotesize]{subfig}
\usepackage{booktabs}
\usepackage{textcomp}
\usepackage{pifont}
\usepackage{bm}
\usepackage[hidelinks]{hyperref}
\graphicspath{{./}}

\makeatletter
\AtBeginDocument{\DeclareMathVersion{bold}
\SetSymbolFont{operators}{bold}{T1}{times}{b}{n}
\SetSymbolFont{NewLetters}{bold}{T1}{times}{b}{it}
\SetMathAlphabet{\mathrm}{bold}{T1}{times}{b}{n}
\SetMathAlphabet{\mathit}{bold}{T1}{times}{b}{it}
\SetMathAlphabet{\mathbf}{bold}{T1}{times}{b}{n}
\SetMathAlphabet{\mathtt}{bold}{OT1}{pcr}{b}{n}
\SetSymbolFont{symbols}{bold}{OMS}{cmsy}{b}{n}
\renewcommand\boldmath{\@nomath\boldmath\mathversion{bold}}}
\makeatother

\theoremstyle{plain}
\newtheorem{theorem}{Theorem}[section]

\newtheorem{lemma}[theorem]{Lemma}
\newtheorem{corollary}[theorem]{Corollary}
\theoremstyle{definition}

\theoremstyle{remark}

\def\BibTeX{{\rm B\kern-.05em{\sc i\kern-.025em b}\kern-.08em
    T\kern-.1667em\lower.7ex\hbox{E}\kern-.125emX}}

\begin{document}
\history{Date of publication xxxx 00, 0000, date of current version xxxx 00, 0000.}
\doi{10.1109/ACCESS.2024.0429000}

\title{Self-Abstraction Learning for Effective and Stable Training of Deep Neural Networks}
\author{\uppercase{Wonyong Cho}$^{*}$\authorrefmark{1}, \uppercase{Taemin Kim}$^{*}$\authorrefmark{2},
\uppercase{Jungmin Kim}\authorrefmark{1}, \uppercase{Jeong-Rae Kim}\authorrefmark{1}, and
\uppercase{Sung Hoon Jung}\authorrefmark{2}}

\address[1]{Department of Mathematics, University of Seoul, Seoul 02504, South Korea}
\address[2]{Department of Applied Artificial Intelligence, Hansung University, Seoul 02876, South Korea}
\tfootnote{*Wonyong Cho and Taemin Kim contributed equally to this work.\\[0.5\baselineskip]This work was supported in part by the 2023 sabbatical year research grant of the University of Seoul and in part by Hansung University.}

\markboth
{Cho et al.: Self-Abstraction Learning for Effective and Stable Training of Deep Neural Networks}
{Cho et al.: Self-Abstraction Learning for Effective and Stable Training of Deep Neural Networks}

\corresp{Corresponding authors: Jeong-Rae Kim (jrkim@uos.ac.kr) and Sung Hoon Jung (shjung@hansung.ac.kr).}

\begin{abstract}
  Training large-scale deep neural networks effectively and stably is essential for applying deep learning across various fields. However, conventional methods, which rely on training a single large network, often encounter challenges such as gradient vanishing, overfitting and unstable learning. To overcome these limitations, we introduce Self-Abstraction Learning (SAL), a hierarchical framework. In SAL, networks are arranged by structural complexity, where the simplest topmost network is trained first and its hidden and output layers serve as guidance for the successively more complex networks below. This top-down sequential guidance effectively mitigates optimization issues, enabling stable training of deep architectures. Various experiments across MLP, CNN, and RNN architectures demonstrate that SAL consistently outperforms conventional methods, ensuring robust generalization even in data-scarce and complex network regimes.
\end{abstract}

\begin{keywords}
Hierarchical learning, deep neural networks, stable learning, generalization, overfitting.
\end{keywords}

\titlepgskip=-21pt
\maketitle

\section{Introduction}

Deep learning models such as MLP, CNN, and RNN have demonstrated remarkable performance across diverse applications \cite{krizhevsky2012imagenet,sutskever2014sequence,goodfellow2016deep}. However, scaling these models to achieve higher performance typically involves increasing architectural complexity and data scale, which leads to significant optimization problems—such as gradient instability \cite{glorot2010understanding} and severe overfitting. We argue that these challenges arise from attempting to model complex input-output relationships without sufficiently structured intermediate representations \cite{bengio2013representation}.

Thus, we propose a learning approach termed Self-Abstraction Learning (SAL), inspired by the divide-and-conquer strategy widely used in computer engineering. In SAL, a small Artificial Neural Network (ANN) model with fewer layers and nodes is initially trained on a coarse-grained representation space at a higher floor. Subsequently, the values of the hidden and output layers of this higher floor ANN model are utilized as targets for training an ANN model with more layers and nodes at a lower floor, enabling it to learn richer representations. 

This hierarchical process begins with the ANN at the top floor and progresses to the bottom floor, training each floor model for a fixed number of epochs. By following this approach, the ANN model at the lower floor is guided by the knowledge of the abstracted representations captured by the higher floor ANN model. This strategy offers several advantages over conventional learning methods:

\begin{itemize}
\item \textbf{Mitigated Gradient Vanishing}: The step-by-step guidance from a top-floor ANN with minimal hidden layers mitigates the gradient vanishing problem. Additionally, the hierarchical approach reduces the reliance on widely used activation functions such as ReLU, which can cause gradient explosion \cite{he2015delving,goodfellow2016deep}. This characteristic removes the dependency on additional mechanisms such as skip connections \cite{he2016deep} to address gradient vanishing. 
\item \textbf{Effective Optimization}: By progressively refining the abstracted representations, the model can converge to the effective representation space.
\item \textbf{Reduced Overfitting}: By learning from coarse-grained representations, the likelihood of overfitting is significantly reduced.
\item \textbf{Robust Performance with Limited Data}: The method shows resilience against overfitting even when training deep models with a small amount of data.
\end{itemize}

Overall, these advantages imply SAL's potential to alleviate the inherent bias-variance trade-off. Unlike conventional training methods that often struggle between high bias in low-complexity networks and high variance in high-complexity networks, SAL helps reconcile this conflict. It achieves superior performance by leveraging guidance from upper floors to mitigate high variance, while simultaneously preserving the low bias inherent in the complex lower floors.

To validate the effectiveness of the proposed SAL method, we conducted various experiments using MLP, CNN, and LSTM models. The results demonstrated that even when training complex networks with limited data, the SAL method achieved effective learning while suppressing gradient vanishing and overfitting. Particularly, on medical datasets in data-scarce regimes, the CNN trained by SAL demonstrated superior performance across all five evaluation metrics—AUROC, F1-Score, Sensitivity, Specificity, and Accuracy—showing its stability and robustness.

Based on these findings, we confirm that the SAL method exhibits all four effects claimed in this work. In future research, it will be necessary to examine whether these effects also hold in more complex architectures such as Transformers \cite{vaswani2017attention}.

\section{Related Work}
\subsection{Knowledge Distillation}
Knowledge Distillation \cite{hinton2015distilling} transfers knowledge from a large pre-trained teacher network to a smaller student network by training the student to mimic the teacher's soft output distribution. FitNets \cite{romero2015fitnets} extended this by introducing intermediate hint layers, where the student is trained to match the output of a selected hidden layer of the teacher via a regressor. While SAL also involves multiple networks of different scales and propagates signals across hidden layers, its objective and direction are fundamentally different. These distillation methods operate in a complex-to-simple direction for the purpose of model compression, requiring a fully pre-trained teacher before distillation begins. SAL operates in the opposite direction, where structurally simpler 
upper-floor models sequentially guide multiple lower-floor models 
of increasing complexity within a single training pipeline, 
with the goal of stable and effective training of the most 
complex model. The final deployed model in SAL is the most 
complex one, not the compressed student.

\setlength{\textfloatsep}{25pt}
\begin{algorithm}[tb]
   \caption{SAL Algorithm}
   \label{alg:sal}
   \DontPrintSemicolon
   \KwIn{$\mathcal{D}$ (dataset), $S_{\text{max}}$ (steps), $t$ (training epochs), $r$ (guidance epochs)}
   \textbf{Models:} $\{ANN_f\}_{f=1}^F$ ($ANN_F$ is the topmost)\;
   \For{$step = 1$ \KwTo $S_{\text{max}}$}{
      \For{$f = F-1$ \mbox{\normalfont\bfseries\upshape down to} $1$}{
         Train $ANN_{f+1}$ on $\mathcal{D}$ for $t$ epochs (CE loss)\;
         \For{$epoch = 1$ \KwTo $r$}{
            \For{\textnormal{each mini-batch} $d \subset \mathcal{D}$}{
               Store the values of all hidden and output layers of $ANN_{f+1}$\;
               Train $ANN_f$ using stored values of $ANN_{f+1}$ as targets (MSE loss)\;
            }
         }
      }
      Train $ANN_1$ on $\mathcal{D}$ for $t$ epochs (CE loss)\;
   }
\end{algorithm}

\subsection{Simple to Complex}
Curriculum learning \cite{bengio2009curriculum} proposes training models by presenting data in order of increasing difficulty, sharing a conceptual alignment with SAL's progressive approach. ProGAN \cite{karras2018progressive} extended this principle to the model level by incrementally adding layers to stabilize GAN training. Net2Net \cite{chen2016net2net} and Network Morphism \cite{wei2016network} enable rapid capacity expansion through function-preserving weight transformations that initialize larger networks from smaller ones. While these methods share the simple-to-complex philosophy with SAL, they each modulate either the difficulty of training data or the architecture of a single growing model. SAL instead trains structurally separate floor models and explicitly transfers intermediate representations between them. Rather than growing a single model or reordering data, SAL guides the complex lower floor through sequential representation-level targets.

\section{Self-Abstraction Learning}

This section outlines the structure and benefits of the proposed SAL. The SAL algorithm, described in Algorithm~\ref{alg:sal}, operates by progressively increasing the complexity of the model on each floor, starting from the simplest model on the top.
The SAL pipeline is designed to use the training information of each layer as the target for the lower floor model, enabling the model to gradually learn more detailed representations. 

After training the top model using CE (Cross-Entropy) loss, its hidden and output layers' values are stored. As shown in Figure \ref{fig:sal_two_floor}, these stored values serve as targets for training the lower floor model. The lower model is trained for multiple epochs based on the targets generated by the upper model. During this process, the MSE loss is used to minimize the difference between the target values and the layers' outputs of the lower model, enabling the layers to align with the representations learned by the upper model. Subsequently, the model containing the guided layers is trained on $\mathcal{D}$ using CE loss, and its hidden and output layer values then serve as guidance targets for the next lower floor.
After recursively applying this guidance process from the top floor model $ANN_F$ down to the penultimate floor model $ANN_2$, the bottom floor model $ANN_1$ is trained using the CE loss. This entire sequence of processes is repeated over $S_{\max}$ steps.

From this hierarchical structure, each model progressively learns fine-grained representations based on the outputs of the floor above. Consequently, this enables the bottom model to capture the most fine-grained details. The method is visualized in Figure \ref{fig:sal_two_floor} and Figure \ref{fig:sal_three_floor}. Figure \ref{fig:sal_two_floor} illustrates the SAL configuration of a two-floor MLP model, while Figure \ref{fig:sal_three_floor} demonstrates how the hidden and output layers at the upper floors guide the corresponding layers at the lower floors in a three-floor structure.

The pipeline of SAL facilitates stable and effective learning by enabling the lower floors to build upon the simplified representations learned by the upper floors. By progressively refining the representation space, SAL enables superior performance over conventional training, particularly in deep neural networks with limited data. We demonstrate these benefits of our approach through theoretical justification and experimental results.

\begin{figure}[t]
    \centering
    \includegraphics[width=\columnwidth, keepaspectratio]{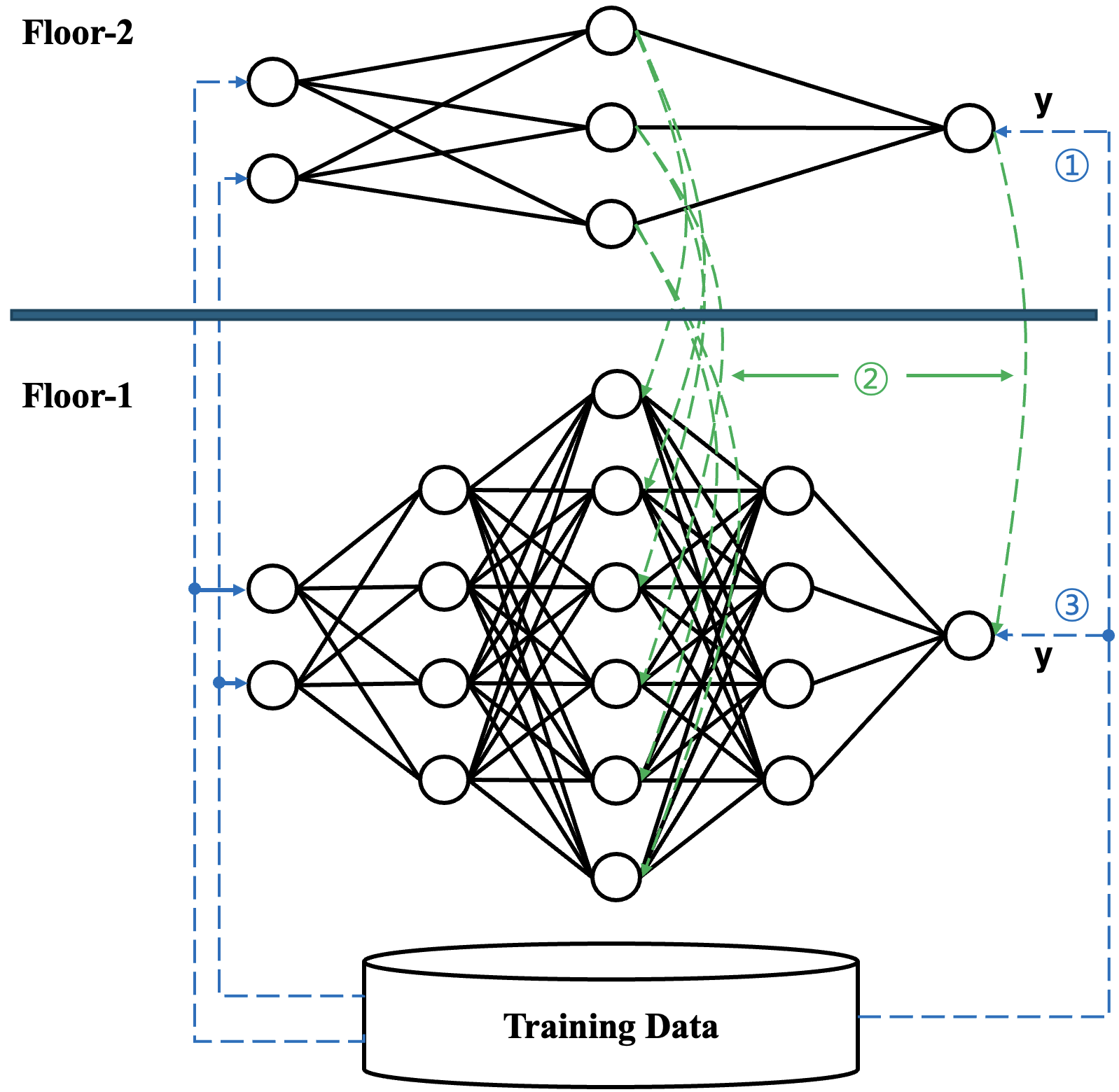}
    \caption{SAL Framework (illustrated with a 1:2 hidden layer width ratio). \ding{172} represents the direct training phase, where Floor-2 is trained for fixed $t$ epochs using labels $y$. \ding{173} indicates the guidance phase, where Floor-1's layers optimize towards Floor-2's values for fixed $r$ epochs. \ding{174} replicates the direct training phase, where Floor-1 is trained for $t$ epochs using labels $y$.} 
    \label{fig:sal_two_floor}
\end{figure}

\section{Theoretical Analysis}
\label{sec:theoretical}
In this section, we provide a theoretical analysis of the generalization guarantees in SAL, emphasizing its superior efficacy in data-scarce regimes. We theoretically show that aligning a high-capacity model with a simpler one allows it to inherit robust generalization properties, thereby mitigating overfitting effectively in such settings.

Let $\mathcal{X}$ be the input data space and $\mathcal{Y} = [0,1]^n$ be the output space. We assume that data points are drawn from a distribution $\mathcal{D}$ over $\mathcal{X} \times \mathcal{Y}$. We now formally demonstrate that the lower, more complex model $f$ effectively inherits the generalization ability of the simpler upper model $h$. For any function $f$, we define the generalization error of $f$ by
$R(f) = \mathbb{E}_{(x,y)\sim \mathcal{D}}\big[l(f(x),y)\big]$
and its empirical loss on a sample $S = \{(x_i, y_i)\}^m_{i=1}$ by
$\hat{R}_S(f) = \frac{1}{m}\sum_{i=1}^{m} l(f(x_i), y_i)$ \cite{mohri2018foundations},
where $l$ denotes the CE loss function combined with the softmax function.
Also, $\hat{\mathcal{R}}_{S}(\mathcal{F})$ denotes the empirical Rademacher complexity of a function class $\mathcal{F}$. Let the function class of the upper floor model $ANN_{k+1}$ be $\mathcal{H}$ and that of the lower floor model $ANN_{k}$ be $\mathcal{F}$. 
Since $ANN_{k+1}$ is simpler than $ANN_k$, we easily obtain $\mathcal{H} \subset \mathcal{F}$.
After sufficient guidance from the upper floor model, we can assume that $\mathbb{E}_{(x, y)\sim \mathcal{D}}[\lVert f(x)-h(x)\rVert_2]$
becomes negligible for $f\in\mathcal{F}$ and $h\in\mathcal{H}$.
\vspace{\baselineskip}

\begin{theorem}[{Generalization Error Bound}]
\label{theorem1}
Let $f \in \mathcal{F}$ and $h \in \mathcal{H}$ be functions satisfying $\mathbb{E}[\|f(x) - h(x)\|_2] < \epsilon$ for a small $\epsilon > 0$.
There exists a constant $C > 0$ such that
\begin{equation*}
R(f) \le R(h) + C \cdot \epsilon.
\end{equation*}
\end{theorem}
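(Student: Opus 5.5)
The plan is to show that the loss $l$ (softmax composed with cross-entropy) is Lipschitz in its first argument, uniformly over $y \in \mathcal{Y} = [0,1]^n$. The bound on $R(f)-R(h)$ then follows by linearity of expectation. Write $l(z,y) = -\sum_{j=1}^n y_j \log \sigma(z)_j$, where $z \in \mathbb{R}^n$ is the logit vector produced by the network and $\sigma$ is the softmax. The gradient in $z$ is
\begin{equation*}
\nabla_z l(z,y) = \Big(\textstyle\sum_{j} y_j\Big)\,\sigma(z) - y .
\end{equation*}
This expression does not depend on any bound on $z$, which is why I expect the argument to be short.

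The first step is to bound this gradient uniformly. Since $y \in [0,1]^n$, we have $s := \sum_j y_j \le n$, and $\sigma(z)$ lies in the probability simplex, so $\|\sigma(z)\|_2 \le 1$. Hence
\begin{equation*}
\|\nabla_z l(z,y)\|_2 \le s\,\|\sigma(z)\|_2 + \|y\|_2 \le n + \sqrt{n} =: C .
\end{equation*}
If $y$ is a one-hot or probability vector, the sharper constant $C=\sqrt{2}$ is available, but any finite constant suffices. By the mean value theorem along the segment from $h(x)$ to $f(x)$, for every $(x,y)$ we get
\begin{equation*}
l(f(x),y) - l(h(x),y) \le C\,\|f(x)-h(x)\|_2 .
\end{equation*}

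The second step is to take expectations over $(x,y)\sim\mathcal{D}$:
\begin{equation*}
R(f) - R(h) = \mathbb{E}\big[l(f(x),y) - l(h(x),y)\big] \le C\,\mathbb{E}\big[\|f(x)-h(x)\|_2\big] < C\epsilon .
\end{equation*}
Rearranging gives $R(f) \le R(h) + C\epsilon$.

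The main point needing care is measurability and integrability. The expectations must be finite for the subtraction to be legitimate. I would either assume $R(h) < \infty$ or observe that $l(f(x),y) \le l(h(x),y) + C\|f(x)-h(x)\|_2$ pointwise, so that finiteness of $R(f)$ follows from finiteness of $R(h)$ together with the hypothesis. I would also state the convention explicitly: $f$ and $h$ output logits, which are the inputs to the softmax-CE loss $l$. If the paper instead intends $f(x)$ to be post-softmax probabilities, the loss is not Lipschitz near the boundary of the simplex. In that case I would need an extra assumption that outputs are bounded away from zero, say $\sigma(\cdot)_j \ge \delta$, which gives $C = \sqrt{n}/\delta$. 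I would therefore adopt the logit interpretation.
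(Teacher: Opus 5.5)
Your proposal is correct and follows essentially the same route as the paper. Both arguments bound $\nabla_z l(z,y)=(\sum_j y_j)\sigma(z)-y$ uniformly over $y\in[0,1]^n$ (the paper states $n\sqrt{n}$; your triangle-inequality constant $n+\sqrt{n}$ is equally valid and derived more cleanly), then apply the mean value theorem to obtain $l(f(x),y)\le l(h(x),y)+C\|f(x)-h(x)\|_2$ pointwise, and finally take the expectation over $\mathcal{D}$. Your integrability caveat does no harm but is not needed: since $l\ge 0$, you can integrate the pointwise inequality directly in $[0,\infty]$ without subtracting, which is how the paper writes it.
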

This can be proven using the triangle inequality and the Lipschitz continuity of the loss function. A detailed proof can be found in the Appendix \ref{app:proof_of_theorem_4.1}. In the data-scarce regime, where the simpler model $h$ tends to generalize better, this theorem implies that $f$ inherits the strong generalization ability of $h$.

\vspace{\baselineskip}

By the generalization error bound inequality \cite{mohri2018foundations}, $R(h)$ is bounded by $\hat{R}_{S}(h)+2\hat{\mathcal{R}}_{S}(\mathcal{H})+3\sqrt{\frac{\log\frac{2}{\delta}}{2m}}$ with a probability at least $1-\delta$. Combining this with Theorem \ref{theorem1}, we obtain the following Corollary.

\begin{figure}[t]
    \centering
    \includegraphics[width=\columnwidth, keepaspectratio]{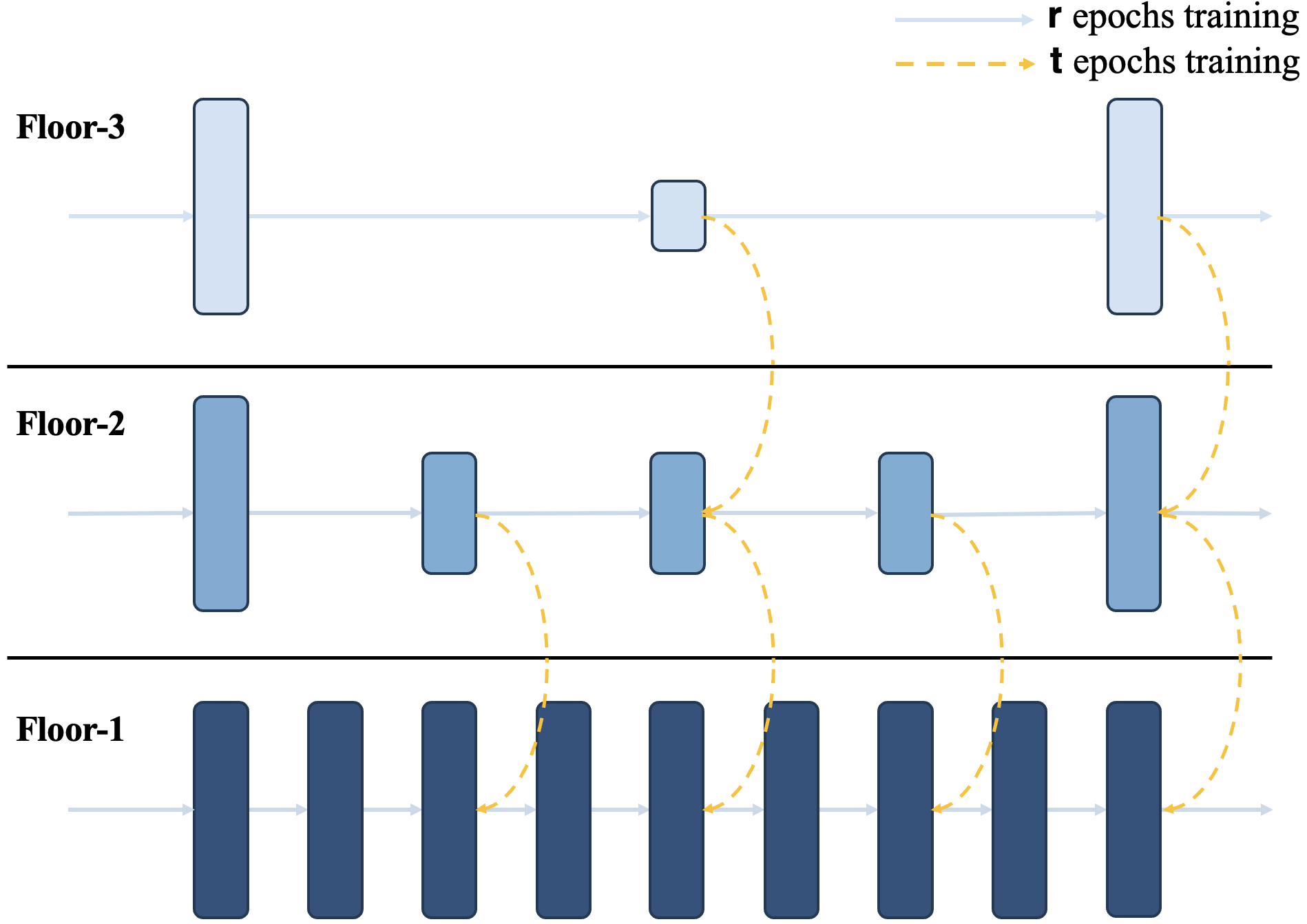}
    \caption{SAL learning process for a three-floor structure. Solid and dashed arrows represent training using CE loss and MSE loss, respectively.}
    \label{fig:sal_three_floor}
\end{figure}

\vspace{\baselineskip}

\begin{corollary} From the perspective of Rademacher complexity, for any $\delta > 0$, the following inequality holds with probability at least $1-\delta$.
\begin{equation*}
R(f) \le \hat{R}_{S}(h)+2\hat{\mathcal{R}}_{S}(\mathcal{H})+3\sqrt{\frac{\log\frac{2}{\delta}}{2m}}+C \cdot \epsilon.
\end{equation*}
\end{corollary}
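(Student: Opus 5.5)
The corollary follows by chaining Theorem~\ref{theorem1} with the standard Rademacher-complexity generalization bound for the fixed class $\mathcal{H}$. Concretely, Theorem~\ref{theorem1} gives the deterministic inequality $R(f) \le R(h) + C\epsilon$ whenever $\mathbb{E}[\|f(x)-h(x)\|_2]<\epsilon$. It holds pointwise in $(f,h)$ and involves no randomness from the sample $S$. The Rademacher bound from \cite{mohri2018foundations} states that, with probability at least $1-\delta$ over the draw of $S$ of size $m$, simultaneously for all $h\in\mathcal{H}$,
\begin{equation*}
R(h) \le \hat{R}_{S}(h)+2\hat{\mathcal{R}}_{S}(\mathcal{H})+3\sqrt{\tfrac{\log\frac{2}{\delta}}{2m}}.
\end{equation*}
On this event I substitute the bound for $R(h)$ into the theorem's inequality, which gives the displayed statement. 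The high-probability event is inherited unchanged, because the theorem holds surely.

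The steps, in order, are as follows.
\begin{enumerate}
\item Fix $\delta>0$ and let $E$ be the event on which the uniform bound above holds over all of $\mathcal{H}$, so that $\Pr(E)\ge 1-\delta$.
\item On $E$, take the pair $(f,h)$ produced by the guidance phase, with $\mathbb{E}[\|f(x)-h(x)\|_2]<\epsilon$, and apply Theorem~\ref{theorem1}.
\item Add the two inequalities to conclude.
\end{enumerate}

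The main obstacle is the applicability of the Mohri et al.\ bound. That bound is stated for loss classes taking values in $[0,1]$, with the Rademacher complexity being that of the \emph{loss-composed} class $\{(x,y)\mapsto l(h(x),y)\}$, not of $\mathcal{H}$ itself. The CE-with-softmax loss is unbounded in general, so I would handle this in two stages.
\begin{itemize}
\item First, restrict to bounded network outputs, which holds under bounded weights and inputs. Then $l$ takes values in some $[0,B]$, and rescaling by $B$ is absorbed into the constants.
\item Second, pass from the loss class to $\mathcal{H}$ with Talagrand's contraction lemma, or its vector-valued version. This uses the same Lipschitz constant of $l\circ\mathrm{softmax}$ in its first argument that Theorem~\ref{theorem1} already relies on.
\end{itemize}
I would read $\hat{\mathcal{R}}_S(\mathcal{H})$ in the statement as shorthand for the complexity of this composed class, or else absorb the Lipschitz factor into it. In either case the structure of the bound is unchanged.

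One further subtlety is that $h$ and $f$ are chosen using $S$. The argument therefore genuinely needs the bound to hold uniformly over $\mathcal{H}$, which the Rademacher bound does provide, rather than for a fixed $h$. Theorem~\ref{theorem1} is deterministic, so the data-dependence of $f$ causes no further issue. In particular, no complexity term for the larger class $\mathcal{F}$ appears.
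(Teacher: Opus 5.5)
Your proposal follows the paper's argument. The paper also obtains the corollary by plugging the Mohri et al.\ bound $R(h)\le \hat{R}_S(h)+2\hat{\mathcal{R}}_S(\mathcal{H})+3\sqrt{\log(2/\delta)/(2m)}$, which holds with probability at least $1-\delta$, into the deterministic inequality $R(f)\le R(h)+C\epsilon$ of Theorem~\ref{theorem1}. The paper leaves implicit the caveats you raise: the CE loss is unbounded, the cited bound applies to the loss-composed class, and $h$ depends on $S$ so uniformity over $\mathcal{H}$ is needed. Note, though, that absorbing a loss range $B>1$ or a contraction factor changes the explicit constants $2$ and $3$. So the display holds verbatim only under your first reading, with a $[0,1]$-valued loss and $\hat{\mathcal{R}}_S(\mathcal{H})$ denoting the complexity of the loss class.
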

Since $\mathcal{H}\subset\mathcal{F}$, it follows $\hat{\mathcal{R}}_{S}(\mathcal{H})\le\hat{\mathcal{R}}_{S}(\mathcal{F})$. For sufficiently large $m$, $\hat{R}_{S}(h)$ and $\hat{R}_{S}(f)$ can be
considered similar in value. A detailed discussion on the sample size $m$ can be found in Appendix \ref{app:discussion on sample size}. Consequently, this can yield a tighter upper bound than the generalization error bound \cite{mohri2018foundations} for $f$. This indicates that the generalization error of the more
complex model $f$ is also effectively governed by the Rademacher complexity of the simpler model's function class $\mathcal{H}$.

Based on this theoretical analysis, we conclude that SAL effectively mitigates overfitting, a primary cause of performance degradation in data-scarce regimes. This theoretical bound supports the observation that SAL achieves robust performance in such situations. Collectively, these derivations imply that the top-down guidance acts as a form of implicit regularization. By constraining the optimization trajectory of the complex lower model to align with the simpler upper model, SAL effectively narrows the search space successively.

\section{Experiments}
\subsection{Experiments Setup}
\label{sec:experiments setup}
This study evaluates the proposed characteristics of SAL, including mitigated gradient vanishing, effective optimization, robust performance in limited data environments, and reduced overfitting, as outlined in the introduction. Experiments were designed to compare the performance of SAL with conventional training methods across various neural network architectures and datasets. For the plain baseline network, we utilized a standard architecture, trained without any hierarchical structures. To ensure a fair comparison, the bottom floor (Floor-1) of the SAL network was configured to be identical to this baseline. We fixed the direct training epochs ($t$) at 5 for image datasets, 2 for text datasets, and the guidance epochs ($r$) at 10 for all datasets.

Table \ref{tab:datasets} summarizes the datasets used in our experiments. The MLP models were tested on MNIST \cite{lecun2002gradient}, CIFAR10, CIFAR100 \cite{krizhevsky2009learning}, and IMAGENETTE \cite{deng2009imagenet} datasets. The CNN models were evaluated on the same image datasets, and the LSTM models were tested on CARER \cite{saravia2018carer}, TREC, and TREC\_FINE \cite{li2002learning}. All models are configured with a fixed depth of 8 layers at Floor-1, which
serves as the bottom floor and is identical to the plain baseline. For SAL with floor size $F{=}3$, the upper floors are constructed by halving both depth and width at each successive floor: Floor-2 uses 4 layers at half the width, and Floor-3 uses 2 layers at a quarter of the width. Formally, for floor $F$ counted from the bottom, the depth is
$\lfloor 8 / 2^{F-1} \rfloor$ and the hidden size is $W / 2^{F-1}$,
where $W$ denotes the base hidden size of Floor-1. For MLP, $W$ is set
to 256 for MNIST and CIFAR10, and 512 for CIFAR100 and IMAGENETTE. For
CNN, $W$ refers to the number of channels and is set to 512 across all
image datasets. For LSTM, $W$ is set to 64 for TREC and TREC\_FINE, and
128 for CARER. Within each floor, all layers share the same hidden size, so that floors
differ only in depth and hidden size. This uniform within-floor design
keeps the architectural configuration simple and reproducible across
all experimental settings.

To evaluate the robustness of SAL under data-scarce regimes, experiments were conducted using the full datasets and subsets containing 1/10 and 1/100 of the original data. These are denoted as (1), (1/10), and (1/100) respectively in Table \ref{tab:Gradient_Vanishing}, Table \ref{tab:SAL_Performance}, Table \ref{tab:hyperparam_epoch}, and Table \ref{tab:hyperparam_floor}. Both the plain networks and the SAL networks were trained for a duration sufficient to trigger early stopping, set with a patience of 10 epochs for plain and 10 steps for SAL to maximize each model's performance. All experimental results from
Table~\ref{tab:SAL_Performance} onward are obtained using the ReLU
activation function, and all reported values are mean $\pm$ standard
deviation over five independent runs. We did not employ additional regularization methods such as Batch Normalization, Layer Normalization, or Dropout in any experiment except the BUSI application in Section~\ref{sec:busi}, in order to isolate the effect of the proposed learning framework.

\begin{table}[!t]
\caption{Overview of datasets used in the experiments, which includes description, number of classes, and sizes for image classification and text classification tasks.}
\label{tab:datasets}
\begin{center}
\setlength{\tabcolsep}{4pt}
\resizebox{\columnwidth}{!}{\begin{tabular}{lp{4.5cm}cc} 
\toprule
\textbf{Dataset} & \textbf{Description} & \textbf{Classes} & \textbf{Size (Train / Test)} \\
\midrule
MNIST & Grayscale images of handwritten digits & 10 & 60,000 / 10,000 \\
CIFAR10 & Color images of 10 object categories & 10 & 50,000 / 10,000 \\
CIFAR100 & Color images of 100 object categories & 100 & 50,000 / 10,000 \\
IMAGENETTE & Subset of ImageNet with 10 categories & 10 & 9,469 / 3,925 \\
CARER & English Twitter messages (6 emotions) & 6 & 16,000 / 2,000 \\
TREC & Text data for question classification & 6 & 5,452 / 500 \\
TREC\_FINE & Fine-grained version of TREC & 50 & 5,452 / 500 \\
\bottomrule
\end{tabular}}
\end{center}
\end{table}

\subsection{Gradient Vanishing and Stability}
To verify the efficacy of SAL in mitigating gradient vanishing, we employed the sigmoid function as an activation function, which is susceptible to this problem \cite{goodfellow2016deep}. The LSTM model, which internally utilizes the tanh activation function, was excluded from this analysis. Table \ref{tab:Gradient_Vanishing} demonstrates that SAL successfully mitigates the gradient vanishing and generally yields superior performance compared to plain baseline training. 

As shown in Table \ref{tab:Gradient_Vanishing}, SAL consistently outperforms the plain baseline across most experimental configurations. This performance gap is particularly pronounced in the CNN models which are inherently better suited for capturing spatial features in image datasets than the MLP models. For instance, on the CIFAR10 (1), while the plain network performs at the level of random guessing, SAL achieves a mean accuracy of nearly 64\%. Performance gains are still observed in data-scarce regimes. Notably, when training the MLP on MNIST (1/100), SAL maintains high performance despite the drastic reduction in training data.

Additionally, we examined the gradient flow to evaluate the impact of SAL on backward propagation. Specifically, we computed the ratio of the weight gradient magnitude at each layer to that of the output layer. Considering that the early stages of training are heavily influenced by initialization and that gradients naturally diminish as the model converges in the later stages, we measured the gradient ratio at the midpoint of the total training duration determined by early stopping.
As shown in Figure \ref{fig:sal_gradient}, unlike plain networks, where gradient values diminish rapidly as they propagate away from the output layer, SAL maintains stable gradients through guidance from upper floors. This facilitates more effective backpropagation down to the earlier layers.

\begin{table}[t!]
    \caption{Performance comparison between SAL and plain training using sigmoid activation. Results are mean $\pm$ std over five independent runs. SAL demonstrates robust performance against the gradient vanishing problem induced by sigmoid activation.}
    \label{tab:Gradient_Vanishing}
    \centering
    \setlength{\tabcolsep}{3pt}
    \begin{tabular}{llcc}
    \toprule
    \textbf{Network} & \textbf{Dataset (Subset Ratio)} & \textbf{SAL (Acc)} & \textbf{plain (Acc)} \\
    \midrule
    MLP & MNIST (1) & \textbf{96.58 $\pm$ 0.10} & 96.10 $\pm$ 0.17 \\
    (Sigmoid) & MNIST (1/10) & \textbf{92.28 $\pm$ 0.56} & 88.56 $\pm$ 1.03 \\
     & MNIST (1/100) & \textbf{73.62 $\pm$ 5.14} & 16.93 $\pm$ 5.10 \\
    \cmidrule{2-4}
     & CIFAR10 (1) & \textbf{47.13 $\pm$ 1.67} & 41.16 $\pm$ 1.31 \\
     & CIFAR10 (1/10) & \textbf{27.89 $\pm$ 1.08} & 20.40 $\pm$ 1.41 \\
     & CIFAR10 (1/100) & \textbf{16.72 $\pm$ 1.44} & 13.73 $\pm$ 3.55 \\
    \cmidrule{2-4}
     & CIFAR100 (1) & \textbf{19.96 $\pm$ 0.64} & 1.00 $\pm$ 0.00 \\
     & CIFAR100 (1/10) & \textbf{6.74 $\pm$ 0.69} & 1.00 $\pm$ 0.00 \\
     & CIFAR100 (1/100) & \textbf{1.34 $\pm$ 0.38} & 1.00 $\pm$ 0.00 \\
    \cmidrule{2-4}
     & IMAGENETTE (1) & \textbf{37.11 $\pm$ 1.08} & 26.62 $\pm$ 9.80 \\
     & IMAGENETTE (1/10) & \textbf{21.40 $\pm$ 2.73} & 15.66 $\pm$ 0.59 \\
     & IMAGENETTE (1/100) & \textbf{13.27 $\pm$ 4.72} & 10.96 $\pm$ 2.18 \\
    \midrule
    CNN & MNIST (1) & \textbf{99.44 $\pm$ 0.05} & 11.35 $\pm$ 0.00 \\
    (Sigmoid) & MNIST (1/10) & \textbf{98.34 $\pm$ 0.07} & 11.35 $\pm$ 0.00 \\
     & MNIST (1/100) & \textbf{11.35 $\pm$ 0.00} & 10.92 $\pm$ 0.59 \\
    \cmidrule{2-4}
     & CIFAR10 (1) & \textbf{63.99 $\pm$ 0.88} & 10.00 $\pm$ 0.00 \\
     & CIFAR10 (1/10) & \textbf{40.68 $\pm$ 1.05} & 10.00 $\pm$ 0.00 \\
     & CIFAR10 (1/100) & \textbf{27.02 $\pm$ 2.29} & 10.00 $\pm$ 0.00 \\
    \cmidrule{2-4}
     & CIFAR100 (1) & \textbf{31.12 $\pm$ 1.18} & 1.00 $\pm$ 0.00 \\
     & CIFAR100 (1/10) & \textbf{9.93 $\pm$ 1.20} & 1.00 $\pm$ 0.00 \\
     & CIFAR100 (1/100) & \textbf{1.72 $\pm$ 0.74} & 1.00 $\pm$ 0.00 \\
    \cmidrule{2-4}
     & IMAGENETTE (1) & \textbf{64.33 $\pm$ 3.00} & 9.27 $\pm$ 0.38 \\
     & IMAGENETTE (1/10) & \textbf{28.28 $\pm$ 11.18} & 9.64 $\pm$ 0.50 \\
     & IMAGENETTE (1/100) & 9.46 $\pm$ 0.49 & \textbf{9.74 $\pm$ 0.66} \\
    \bottomrule
    \end{tabular}
\end{table}

These results underscore the objective of SAL in stabilizing gradients, which leads to enabling successful training of deeper networks. This suggests that the advantages of SAL are likely to be amplified in models with greater depth.

\setlength{\textfloatsep}{25pt}
\begin{figure}[!t]
    \centering
    \includegraphics[width=\columnwidth]{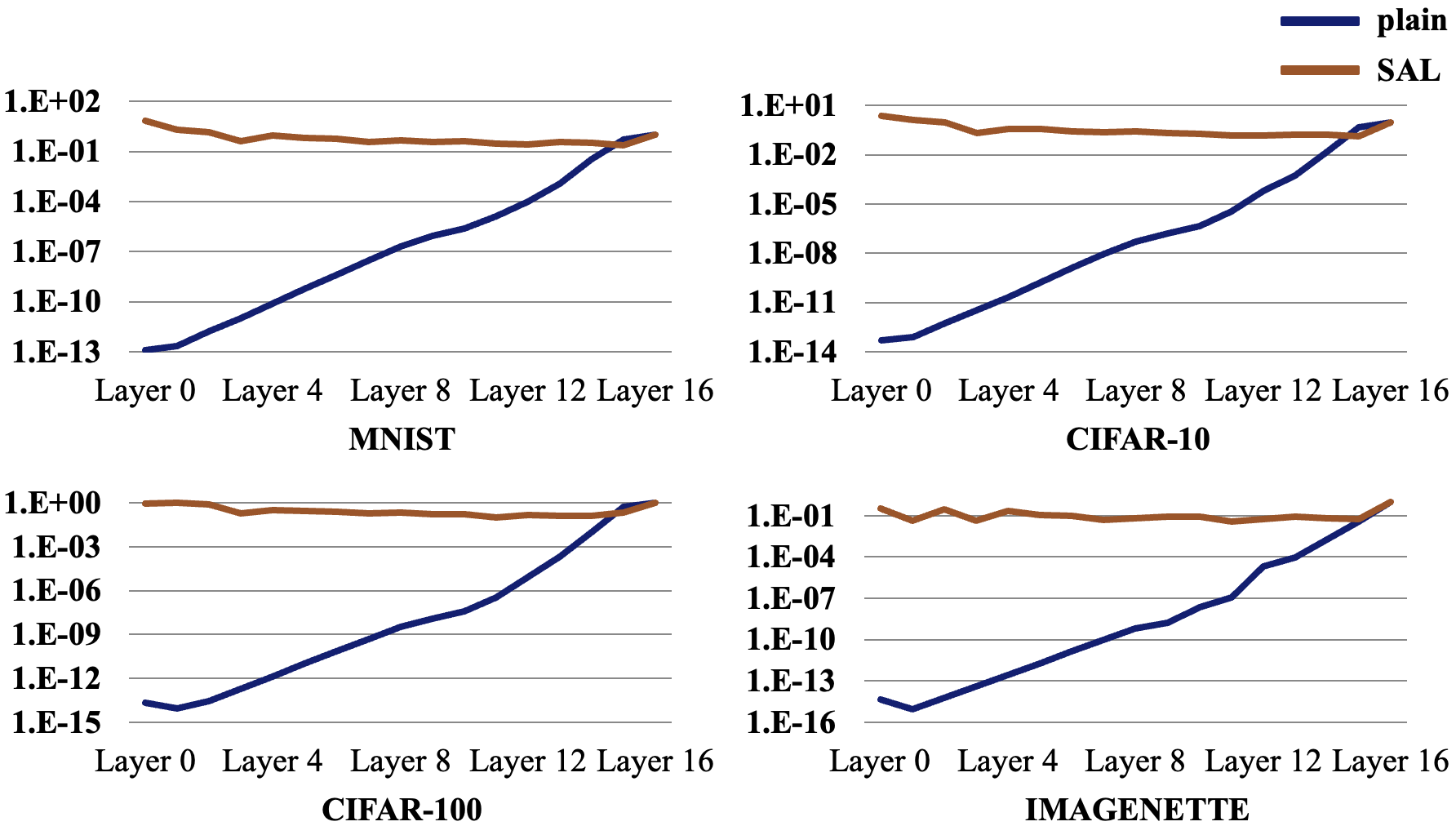}
    \caption{Log-scale gradient ratio across layers for various datasets between SAL and plain training. We conducted measurements on MLP models trained on image datasets. To clearly manifest the disparity in gradient flow, we extended the model depth to 16 layers.}
    \label{fig:sal_gradient}
\end{figure}

\subsection{General Performance Comparison}

\begin{table}[!t]
\caption{General performance comparison between SAL and plain training. SAL shows improvements in most datasets and limited data subsets.}
\label{tab:SAL_Performance}
\begin{center}
\setlength{\tabcolsep}{3pt}
\begin{tabular}{llcc}
\toprule
\textbf{Network} & \textbf{Dataset (Subset Ratio)} & \textbf{SAL (Acc)} & \textbf{plain (Acc)} \\
\midrule
MLP (ReLU) & MNIST (1) & 96.60 $\pm$ 0.37 & \textbf{97.34 $\pm$ 0.21} \\
& MNIST (1/10) & 92.55 $\pm$ 0.61 & \textbf{93.30 $\pm$ 0.45} \\
& MNIST (1/100) & \textbf{82.04 $\pm$ 0.44} & 76.59 $\pm$ 3.69 \\
\cmidrule{2-4}
& CIFAR10 (1) & 49.25 $\pm$ 1.20 & \textbf{50.51 $\pm$ 0.50} \\
& CIFAR10 (1/10) & \textbf{39.63 $\pm$ 1.58} & 28.57 $\pm$ 1.63 \\
& CIFAR10 (1/100) & \textbf{25.46 $\pm$ 1.16} & 19.95 $\pm$ 1.89 \\
\cmidrule{2-4}
& CIFAR100 (1) & \textbf{20.85 $\pm$ 0.13} & 17.20 $\pm$ 0.49 \\
& CIFAR100 (1/10) & \textbf{11.04 $\pm$ 0.79} & 4.42 $\pm$ 1.13 \\
& CIFAR100 (1/100) & \textbf{2.48 $\pm$ 0.93} & 1.59 $\pm$ 0.43 \\
\cmidrule{2-4}
& IMAGENETTE (1) & \textbf{41.29 $\pm$ 1.26} & 35.28 $\pm$ 2.64 \\
& IMAGENETTE (1/10) & \textbf{30.12 $\pm$ 1.91} & 23.57 $\pm$ 2.47 \\
& IMAGENETTE (1/100) & \textbf{13.58 $\pm$ 2.72} & 12.62 $\pm$ 3.98 \\
\midrule
CNN (ReLU) & MNIST (1) & \textbf{99.38 $\pm$ 0.10} & 99.20 $\pm$ 0.12 \\
& MNIST (1/10) & 97.42 $\pm$ 1.61 & \textbf{97.45 $\pm$ 0.45} \\
& MNIST (1/100) & 73.67 $\pm$ 34.86 & \textbf{85.02 $\pm$ 3.49} \\
\cmidrule{2-4}
& CIFAR10 (1) & \textbf{76.17 $\pm$ 1.42} & 74.64 $\pm$ 0.80 \\
& CIFAR10 (1/10) & \textbf{51.55 $\pm$ 1.80} & 46.87 $\pm$ 1.73 \\
& CIFAR10 (1/100) & \textbf{31.21 $\pm$ 2.39} & 26.82 $\pm$ 1.37 \\
\cmidrule{2-4}
& CIFAR100 (1) & \textbf{42.52 $\pm$ 1.36} & 31.58 $\pm$ 1.48 \\
& CIFAR100 (1/10) & \textbf{12.62 $\pm$ 0.94} & 8.76 $\pm$ 0.98 \\
& CIFAR100 (1/100) & \textbf{1.35 $\pm$ 0.29} & 1.28 $\pm$ 0.35 \\
\cmidrule{2-4}
& IMAGENETTE (1) & \textbf{76.34 $\pm$ 1.06} & 69.67 $\pm$ 3.25 \\
& IMAGENETTE (1/10) & \textbf{44.80 $\pm$ 3.51} & 32.22 $\pm$ 3.65 \\
& IMAGENETTE (1/100) & 13.20 $\pm$ 4.27 & \textbf{13.65 $\pm$ 1.78} \\
\midrule
LSTM & CARER (1) & \textbf{90.83 $\pm$ 0.50} & 84.77 $\pm$ 5.44 \\
& CARER (1/10) & 32.50 $\pm$ 2.35 & \textbf{35.60 $\pm$ 1.90} \\
& CARER (1/100) & \textbf{34.81 $\pm$ 0.63} & 33.61 $\pm$ 2.55 \\
\cmidrule{2-4}
& TREC (1) & \textbf{77.80 $\pm$ 1.07} & 65.76 $\pm$ 5.98 \\
& TREC (1/10) & \textbf{50.16 $\pm$ 4.54} & 46.92 $\pm$ 3.32 \\
& TREC (1/100) & \textbf{31.64 $\pm$ 6.57} & 27.20 $\pm$ 5.99 \\
\cmidrule{2-4}
& TREC\_FINE (1) & \textbf{55.56 $\pm$ 2.79} & 30.92 $\pm$ 16.35 \\
& TREC\_FINE (1/10) & \textbf{31.24 $\pm$ 8.13} & 17.96 $\pm$ 9.53 \\
& TREC\_FINE (1/100) & \textbf{22.24 $\pm$ 9.47} & 11.00 $\pm$ 0.00 \\
\bottomrule
\end{tabular}
\end{center}
\end{table}
\begin{figure*}[!t]
    \centering
    \includegraphics[width=\textwidth]{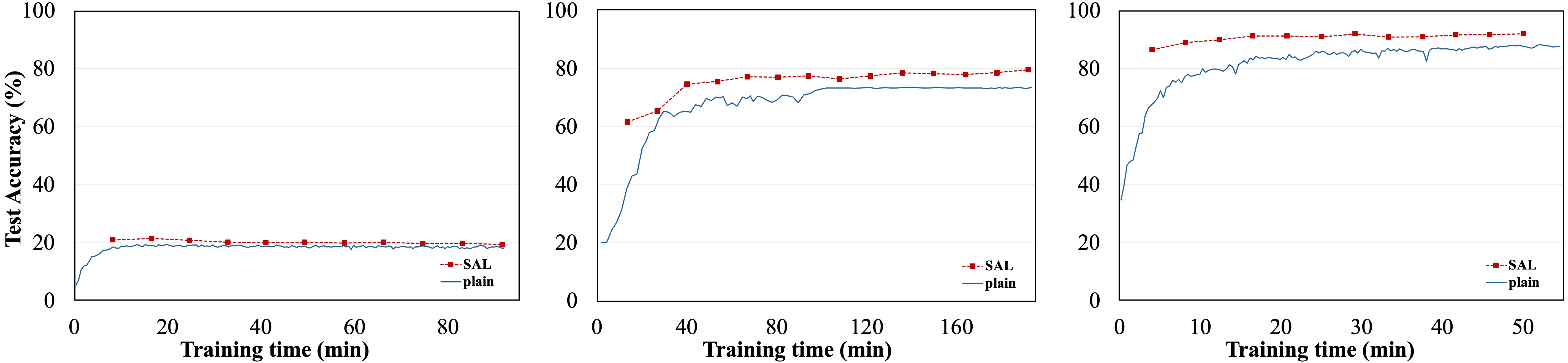}
    \caption{Test accuracy over wall-clock training time for SAL and plain training
    under a time-matched setting. SAL is evaluated at each step and plain at each
    epoch. SAL consistently achieves higher accuracy throughout the training time
    across MLP (CIFAR100), CNN (Imagenette), and LSTM (CARER) configurations
    (from left to right).}
    \label{fig:time_matched}
\end{figure*}

We now evaluate SAL performance under general training configurations employing the ReLU activation function on the MLP and the CNN models. For the LSTM, we followed the standard model utilizing sigmoid and tanh activation functions. Table \ref{tab:SAL_Performance} presents the comparative results between SAL and plain training. SAL generally shows better performance across the majority of datasets even in subset datasets.  
For instance, using the LSTM model on TREC (1) yields superior performance with low variance, with similarly strong performance observed on using the MLP model on CIFAR100 (1/10) and the CNN model on IMAGENETTE (1/10).
Although marginal performance drops were observed in a small number of cases,
these can be attributed to two factors: task simplicity, where plain training
already converges reliably and SAL's guidance becomes redundant, and
limitations of the current element-wise guidance mechanism. The latter is
expected to be addressed by more sophisticated guidance strategies.
Overall, these results demonstrate that SAL consistently achieves
competitive or superior performance compared to plain training across
diverse architectures and datasets under standard training configurations. Moreover, SAL performance remains superior when using smaller subsets of datasets, demonstrating the method's robustness and generalization ability. The consistent performance advantage of SAL in data-scarce subsets (1/10, 1/100) across most datasets also provides indirect empirical support for the overfitting mitigation effect established theoretically in Section~\ref{sec:theoretical}. By leveraging its hierarchical structure, SAL progressively 
refines the coarse-grained representations, 
which appears to facilitate more effective optimization 
compared to conventional training.

\begin{table}[!t]
\caption{Peak VRAM and wall-clock training time comparison between SAL and plain training (Tesla V100).}
\label{tab:overhead}
\begin{center}
\setlength{\tabcolsep}{4pt}
\resizebox{\columnwidth}{!}{\begin{tabular}{llcccc}
\toprule
\textbf{Network} & \textbf{Dataset}
  & \textbf{Plain VRAM} & \textbf{SAL VRAM}
  & \textbf{Plain (min)} & \textbf{SAL (min)} \\
\midrule
MLP    & MNIST      & 30.8 MB  & 31.9 MB  & 17  & 123 \\
              & CIFAR10    & 42.1 MB  & 43.4 MB  & 10  & 91  \\
              & CIFAR100   & 84.1 MB  & 89.5 MB  & 11  & 91  \\
              & IMAGENETTE & 821.2 MB & 898.7 MB & 9   & 94  \\
\midrule
CNN    & MNIST      & 335.8 MB & 401.6 MB & 8   & 152 \\
              & CIFAR10    & 336.3 MB & 401.1 MB & 9   & 103 \\
              & CIFAR100   & 336.5 MB & 401.4 MB & 10  & 115 \\
              & IMAGENETTE & 3.5 GB   & 5.0 GB   & 27  & 191 \\
\midrule
LSTM          & CARER    & 1.1 GB   & 1.3 GB   & 11  & 54  \\
              & TREC       & 584.6 MB & 660.7 MB & 2   & 21  \\
              & TREC\_FINE & 584.6 MB & 660.8 MB & 2   & 24  \\
\bottomrule
\end{tabular}}
\end{center}
\end{table}

\subsection{Computational Overhead and Training Efficiency}
\label{sec:overhead}

SAL trains multiple floor models sequentially, which raises the question of whether its performance gains come at an acceptable computational cost. We analyze this from two complementary perspectives, peak VRAM usage and wall-clock training time.

\textbf{VRAM and Training Time.}
We measured peak GPU memory allocation and total wall-clock training time
on Tesla V100 for both SAL and plain baseline across all architectures and
datasets, with early stopping determined by a patience of 10 epochs for
plain and 10 steps for SAL, as described in Section~\ref{sec:experiments setup}. As shown in
Table~\ref{tab:overhead}, SAL incurs only a modest VRAM overhead relative to plain
training. Across MLP configurations, SAL requires roughly 3--10\% more memory.
CNN and LSTM configurations typically show increases of approximately 13--24\%,
with the exception of CNN on IMAGENETTE where the overhead reaches approximately
43\% due to the larger spatial feature maps at 160$\times$160 resolution. This
overhead stems from the multi-layer MSE guidance, which requires storing intermediate
activations of both upper and lower floors simultaneously, and remains confined to a
single guidance phase rather than accumulating over the full training duration.
Regarding training time, SAL requires more total wall-clock time than plain training
due to its multi-floor pipeline. However, the time-matched comparison in
Figure~\ref{fig:time_matched} demonstrates that SAL consistently achieves higher
accuracy than plain training as detailed below.

\textbf{Time-Matched Comparison.}
Since 1 step of SAL involves more total epochs than a single plain training run,
a direct step-to-epoch comparison would be unfair for evaluating training efficiency.
We therefore evaluate SAL and plain training under a time-matched setting. Both
methods are run for the same wall-clock time, determined by the point at which
SAL triggers early stopping, and we report test accuracy against wall-clock training
time. Figure~\ref{fig:time_matched} presents results for three representative
configurations — MLP on CIFAR100, CNN on Imagenette, and LSTM on CARER — covering
all three architecture types evaluated in this work. SAL generally achieves higher
test accuracy than plain training throughout the entire training duration. Despite
extended training beyond the early stopping point, the plain baseline remained below SAL across all three cases. Notably, in the case of LSTM on CARER, SAL reaches its performance ceiling early in training, while plain training continues to improve
slowly and ultimately plateaus at a lower accuracy level. This suggests that hierarchical guidance of SAL leads to more efficient use of compute.

\subsection{Hyperparameter Sensitivity Analysis}
\begin{table}[!t]
\caption{Hyperparameter sensitivity analysis of training epochs ($t$) and guidance epochs ($r$) across all datasets. The hyperparameter settings used in the main experiments are $t=5$, $r=10$ for MLP and CNN, and $t=2$, $r=10$ for LSTM. The best result in each row is shown in bold.}
\label{tab:hyperparam_epoch}
\begin{center}
\setlength{\tabcolsep}{3pt}
\resizebox{\columnwidth}{!}{\begin{tabular}{llccccc}
\toprule
\textbf{Network} & \textbf{Dataset} & $t{=}2, r{=}5$ & $t{=}5, r{=}5$ & $t{=}2, r{=}10$ & $t{=}5, r{=}10$ & plain \\
\midrule
MLP & MNIST (1)          & 96.98$\pm$0.21 & 96.88$\pm$0.32 & 96.80$\pm$0.19 & 96.60$\pm$0.37 & \textbf{97.34$\pm$0.21} \\
    & MNIST (1/10)       & 92.83$\pm$0.39 & 92.55$\pm$0.51 & 92.50$\pm$0.14 & 92.55$\pm$0.61 & \textbf{93.30$\pm$0.45} \\
    & MNIST (1/100)      & 83.14$\pm$0.70 & 81.40$\pm$0.96 & \textbf{83.87$\pm$0.89} & 82.04$\pm$0.44 & 76.59$\pm$3.69 \\
    & CIFAR10 (1)        & 50.12$\pm$0.61 & 50.11$\pm$0.58 & 49.72$\pm$0.88 & 49.25$\pm$1.20 & \textbf{50.51$\pm$0.50} \\
    & CIFAR10 (1/10)     & 38.01$\pm$0.91 & 39.35$\pm$1.06 & 38.28$\pm$0.73 & \textbf{39.63$\pm$1.58} & 28.57$\pm$1.63 \\
    & CIFAR10 (1/100)    & 23.04$\pm$1.70 & 22.80$\pm$2.33 & 22.63$\pm$0.84 & \textbf{25.46$\pm$1.16} & 19.95$\pm$1.89 \\
    & CIFAR100 (1)       & \textbf{21.50$\pm$0.46} & 20.27$\pm$0.34 & 20.81$\pm$0.57 & 20.85$\pm$0.13 & 17.20$\pm$0.49 \\
    & CIFAR100 (1/10)    & 9.32$\pm$0.84 & \textbf{11.22$\pm$0.53} & 9.72$\pm$0.48 & 11.04$\pm$0.79 & 4.42$\pm$1.13 \\
    & CIFAR100 (1/100)   & 2.15$\pm$0.69 & 1.85$\pm$0.76 & \textbf{2.52$\pm$0.45} & 2.48$\pm$0.93 & 1.59$\pm$0.43 \\
    & IMAGENETTE (1)     & 40.25$\pm$0.78 & \textbf{42.47$\pm$0.41} & 41.44$\pm$0.63 & 41.29$\pm$1.26 & 35.28$\pm$2.64 \\
    & IMAGENETTE (1/10)  & 26.10$\pm$3.15 & 27.50$\pm$2.32 & 25.76$\pm$2.18 & \textbf{30.12$\pm$1.91} & 23.57$\pm$2.47 \\
    & IMAGENETTE (1/100) & 10.54$\pm$1.85 & 11.80$\pm$2.05 & 13.28$\pm$1.74 & \textbf{13.58$\pm$2.72} & 12.62$\pm$3.98 \\
\midrule
CNN & MNIST (1)          & 99.35$\pm$0.12 & 99.23$\pm$0.07 & 99.32$\pm$0.09 & \textbf{99.38$\pm$0.10} & 99.20$\pm$0.12 \\
    & MNIST (1/10)       & 97.74$\pm$0.17 & \textbf{97.98$\pm$0.12} & 97.78$\pm$0.05 & 97.42$\pm$1.61 & 97.45$\pm$0.45 \\
    & MNIST (1/100)      & 17.78$\pm$13.76 & 67.57$\pm$19.05 & 24.02$\pm$28.32 & 73.67$\pm$34.86 & \textbf{85.02$\pm$3.49} \\
    & CIFAR10 (1)        & \textbf{77.51$\pm$0.89} & 75.86$\pm$2.03 & 76.23$\pm$1.27 & 76.17$\pm$1.42 & 74.64$\pm$0.80 \\
    & CIFAR10 (1/10)     & \textbf{54.15$\pm$1.45} & 52.52$\pm$1.70 & 53.12$\pm$1.14 & 51.55$\pm$1.80 & 46.87$\pm$1.73 \\
    & CIFAR10 (1/100)    & 27.93$\pm$10.17 & 24.10$\pm$8.71 & \textbf{32.08$\pm$2.91} & 31.21$\pm$2.39 & 26.82$\pm$1.37 \\
    & CIFAR100 (1)       & 41.44$\pm$1.34 & \textbf{42.86$\pm$2.25} & 41.59$\pm$0.81 & 42.52$\pm$1.36 & 31.58$\pm$1.48 \\
    & CIFAR100 (1/10)    & \textbf{12.80$\pm$0.98} & 11.71$\pm$1.26 & 11.95$\pm$0.84 & 12.62$\pm$0.94 & 8.76$\pm$0.98 \\
    & CIFAR100 (1/100)   & 1.58$\pm$0.43 & 1.19$\pm$0.33 & \textbf{1.82$\pm$0.92} & 1.35$\pm$0.29 & 1.28$\pm$0.35 \\
    & IMAGENETTE (1)     & \textbf{76.49$\pm$1.63} & 75.06$\pm$3.06 & 76.24$\pm$1.95 & 76.34$\pm$1.06 & 69.67$\pm$3.25 \\
    & IMAGENETTE (1/10)  & 42.77$\pm$1.27 & 44.06$\pm$4.83 & 44.01$\pm$3.61 & \textbf{44.80$\pm$3.51} & 32.22$\pm$3.65 \\
    & IMAGENETTE (1/100) & 12.05$\pm$3.87 & 13.37$\pm$4.60 & 11.51$\pm$4.10 & 13.20$\pm$4.27 & \textbf{13.65$\pm$1.78} \\
\midrule
LSTM & CARER (1)          & 90.78$\pm$0.50 & 89.55$\pm$0.99 & \textbf{90.83$\pm$0.50} & 88.57$\pm$2.00 & 84.77$\pm$5.44 \\
     & CARER (1/10)       & 34.00$\pm$1.68 & 33.38$\pm$3.74 & 32.50$\pm$2.35 & \textbf{36.95$\pm$4.84} & 35.60$\pm$1.90 \\
     & CARER (1/100)      & \textbf{35.03$\pm$0.33} & 34.77$\pm$0.18 & 34.81$\pm$0.63 & 34.80$\pm$0.09 & 33.61$\pm$2.55 \\
     & TREC (1)             & 76.88$\pm$3.26 & 73.72$\pm$4.18 & \textbf{77.80$\pm$1.07} & 76.40$\pm$1.62 & 65.76$\pm$5.98 \\
     & TREC (1/10)          & \textbf{51.76$\pm$5.54} & 47.96$\pm$6.68 & 50.16$\pm$4.54 & 45.40$\pm$4.24 & 46.92$\pm$3.32 \\
     & TREC (1/100)         & 29.04$\pm$6.66 & 25.32$\pm$9.41 & \textbf{31.64$\pm$6.57} & 28.68$\pm$11.44 & 27.20$\pm$5.99 \\
     & TREC\_FINE (1)      & 54.36$\pm$2.18 & 49.88$\pm$3.05 & \textbf{55.56$\pm$2.79} & 48.64$\pm$3.47 & 30.92$\pm$16.35 \\
     & TREC\_FINE (1/10)   & \textbf{33.96$\pm$5.21} & 28.12$\pm$6.93 & 31.24$\pm$8.13 & 31.08$\pm$4.50 & 17.96$\pm$9.53 \\
     & TREC\_FINE (1/100)  & 14.96$\pm$10.77 & 20.08$\pm$12.52 & \textbf{22.24$\pm$9.47} & 20.04$\pm$10.43 & 11.00$\pm$0.00 \\
\bottomrule
\end{tabular}}
\end{center}
\end{table}

We investigate the sensitivity of SAL to three key hyperparameters: the
number of direct training epochs ($t$), guidance epochs ($r$), and floor
size ($F$). All results are reported as mean $\pm$ standard deviation over
five independent runs.
Table~\ref{tab:hyperparam_epoch} reports accuracy across four $(t, r)$ settings---$(2,5)$, $(5,5)$, $(2,10)$, and $(5,10)$---for all datasets. Overall, performance varies
slightly across the range of $t$ and $r$ values, suggesting that SAL
is not highly sensitive to these hyperparameters. For MLP and CNN, no
single $(t, r)$ setting consistently dominates across all datasets. For
LSTM, $(t{=}2, r{=}10)$ tends to perform favorably, which motivates its
use as the main experimental setting for text datasets. Across most $(t, r)$ configurations and datasets, SAL performs better than
plain training, with exceptions on simpler tasks such as MLP on
MNIST and CIFAR10, where plain training already converges reliably.
Table~\ref{tab:hyperparam_floor} reports accuracy across floor sizes $F \in
\{2, 3, 4\}$, with $F{=}3$ used in the main experiments. The models at each floor are constructed in the same manner as described in Section~\ref{sec:experiments setup}. Performance across floor sizes is generally comparable, with no single value of $F$ consistently outperforming the others across all settings.
Nevertheless, $F{=}3$ tends to yield competitive performance across a broader
range of datasets and data-scarce regimes compared to $F{=}2$ and $F{=}4$,
motivating its selection as the default setting. Regardless of the choice of $F$, SAL
consistently outperforms plain training across the majority of datasets and
subset ratios, confirming that the performance gains of SAL are not
attributable to a specific floor size but stem from the hierarchical
guidance mechanism itself.

\begin{table}[!t]
\caption{Hyperparameter sensitivity analysis of floor size ($F$) across all datasets. The hyperparameter setting used in the main experiments is $F=3$ for all models. The best result in each row is shown in bold.}
\label{tab:hyperparam_floor}
\begin{center}
\setlength{\tabcolsep}{2pt}
\resizebox{\columnwidth}{!}{\begin{tabular}{llcccc}
\toprule
\textbf{Network} & \textbf{Dataset} & $F{=}2$ & $F{=}4$ & $F{=}3$ & plain \\
\midrule
MLP & MNIST (1)          & 97.25$\pm$0.39 & 96.46$\pm$0.12 & 96.60$\pm$0.37 & \textbf{97.34$\pm$0.21} \\
           & MNIST (1/10)       & 92.67$\pm$0.49 & 91.75$\pm$0.55 & 92.55$\pm$0.61 & \textbf{93.30$\pm$0.45} \\
           & MNIST (1/100)      & 77.92$\pm$3.41 & \textbf{84.47$\pm$0.43} & 82.04$\pm$0.44 & 76.59$\pm$3.69 \\
           & CIFAR10 (1)        & \textbf{50.57$\pm$0.63} & 49.22$\pm$0.33 & 49.25$\pm$1.20 & 50.51$\pm$0.50 \\
           & CIFAR10 (1/10)     & 38.67$\pm$1.46 & 38.84$\pm$1.07 & \textbf{39.63$\pm$1.58} & 28.57$\pm$1.63 \\
           & CIFAR10 (1/100)    & 24.41$\pm$1.47 & \textbf{25.72$\pm$1.13} & 25.46$\pm$1.16 & 19.95$\pm$1.89 \\
           & CIFAR100 (1)       & 20.41$\pm$0.49 & \ 20.85$\pm$0.46 & \textbf{20.85$\pm$0.13} & 17.20$\pm$0.49 \\
           & CIFAR100 (1/10)    & 10.21$\pm$0.71 & \textbf{11.47$\pm$0.37} & 11.04$\pm$0.79 & 4.42$\pm$1.13 \\
           & CIFAR100 (1/100)   & \textbf{3.22$\pm$0.66}  & 2.39$\pm$0.46  & 2.48$\pm$0.93  & 1.59$\pm$0.43 \\
           & IMAGENETTE (1)     & \textbf{43.47$\pm$0.96} & 40.71$\pm$0.57 & 41.29$\pm$1.26 & 35.28$\pm$2.64 \\
           & IMAGENETTE (1/10)  & \textbf{32.93$\pm$1.29} & 24.58$\pm$1.62 & 30.12$\pm$1.91 & 23.57$\pm$2.47 \\
           & IMAGENETTE (1/100) & \textbf{13.89$\pm$2.04} & 13.57$\pm$2.50 & 13.58$\pm$2.72 & 12.62$\pm$3.98 \\
\midrule
CNN & MNIST (1)          & 99.21$\pm$0.12 & 99.17$\pm$0.15 & \textbf{99.38$\pm$0.10} & 99.20$\pm$0.12 \\
           & MNIST (1/10)       & \textbf{98.14$\pm$0.30} & 97.19$\pm$1.28 & 97.42$\pm$1.61 & 97.45$\pm$0.45 \\
           & MNIST (1/100)      & 84.85$\pm$13.17& 64.92$\pm$12.23& 73.67$\pm$34.86& \textbf{85.02$\pm$3.49} \\
           & CIFAR10 (1)        & \textbf{76.85$\pm$0.58} & 76.85$\pm$1.64 & 76.17$\pm$1.42 & 74.64$\pm$0.80 \\
           & CIFAR10 (1/10)     & \textbf{54.25$\pm$2.87} & 49.38$\pm$2.69 & 51.55$\pm$1.80 & 46.87$\pm$1.73 \\
           & CIFAR10 (1/100)    & 29.42$\pm$2.38 & 29.67$\pm$0.64 & \textbf{31.21$\pm$2.39} & 26.82$\pm$1.37 \\
           & CIFAR100 (1)       & 42.29$\pm$1.71 & 41.70$\pm$2.08 & \textbf{42.52$\pm$1.36} & 31.58$\pm$1.48 \\
           & CIFAR100 (1/10)    & \textbf{12.68$\pm$1.99} & 9.24$\pm$4.97  & 12.62$\pm$0.94 & 8.76$\pm$0.98 \\
           & CIFAR100 (1/100)   & 1.67$\pm$0.35  & \textbf{1.72$\pm$0.31}  & 1.35$\pm$0.29  & 1.28$\pm$0.35 \\
           & IMAGENETTE (1)     & \textbf{76.86$\pm$1.40} & 76.20$\pm$1.64 & 76.34$\pm$1.06 & 69.67$\pm$3.25 \\
           & IMAGENETTE (1/10)  & 39.17$\pm$3.30 & 44.20$\pm$4.16 & \textbf{44.80$\pm$3.51} & 32.22$\pm$3.65 \\
           & IMAGENETTE (1/100) & 9.94$\pm$0.57  & 12.91$\pm$3.47 & 13.20$\pm$4.27 & \textbf{13.65$\pm$1.78} \\
\midrule
LSTM & CARER (1)       & 90.81$\pm$0.81 & \textbf{91.06$\pm$0.98} & 90.83$\pm$0.50 & 84.77$\pm$5.44 \\
     & CARER (1/10)    & \textbf{36.51$\pm$2.81} & 33.23$\pm$3.46 & 32.50$\pm$2.35 & 35.60$\pm$1.90 \\
     & CARER (1/100)   & 34.80$\pm$0.11 & 34.42$\pm$0.45 & \textbf{34.81$\pm$0.63} & 33.61$\pm$2.55 \\
     & TREC (1)          & \textbf{78.60$\pm$1.04} & 76.56$\pm$2.01 & 77.80$\pm$1.07 & 65.76$\pm$5.98 \\
     & TREC (1/10)       & 52.00$\pm$11.87& \textbf{56.16$\pm$5.66} & 50.16$\pm$4.54 & 46.92$\pm$3.32 \\
     & TREC (1/100)      & 30.08$\pm$5.07 & \textbf{38.96$\pm$5.05} & 31.64$\pm$6.57 & 27.20$\pm$5.99 \\
     & TREC\_FINE (1)    & 51.92$\pm$3.65 & 54.48$\pm$4.29 & \textbf{55.56$\pm$2.79} & 30.92$\pm$16.35 \\
     & TREC\_FINE (1/10) & 35.00$\pm$11.19& \textbf{37.20$\pm$2.32} & 31.24$\pm$8.13 & 17.96$\pm$9.53 \\
     & TREC\_FINE (1/100)& 11.88$\pm$1.97 & 21.44$\pm$7.72 & \textbf{22.24$\pm$9.47} & 11.00$\pm$0.00 \\
\bottomrule
\end{tabular}}
\end{center}
\end{table}

\subsection{Practical Application on Data-Scarce Medical Imaging}
\label{sec:busi}
For practical application testing, we further validated our method on the Breast Ultrasound Images (BUSI) dataset \cite{al2020dataset}. This dataset poses challenges due to limited data availability and the inherent risk of overfitting, along with the need for fine-grained diagnosis. The dataset comprises 780 images from 600 patients, categorized into benign (437), malignant (210), and normal (133) classes. The MLP and CNN models were configured with a depth of 8 layers and 512 hidden units for both the plain model and the bottom model of SAL. Subsequently, the dataset was partitioned into training, validation, and test sets with a ratio of 70\%, 15\%, and 15\%, respectively, while maintaining the original class distribution. During the training phase, early stopping was applied with a patience of 10 epochs for plain network and 10 steps for SAL based on the validation AUC. In contrast to the main experiments, Batch Normalization and Dropout were applied to both the plain and SAL networks to maximize their performance. Figure \ref{fig:BUSI_Results} shows that SAL achieves superior mean performance across all metrics-AUROC, F1-Score, Sensitivity, Specificity, Accuracy, marking average AUC improvements of 17.61\% for MLP and 16.46\% for CNN over the plain network. These results suggest that SAL is generally more effective in mitigating overfitting compared to the plain baseline, even under fine-grained classification scenarios.

Collectively, these empirical findings corroborate the 
effectiveness of SAL in reducing overfitting while maintaining 
competitive performance across diverse architectures, datasets, 
and data regimes.

\begin{figure*}[!t]
    \centering
    \subfloat[]{\includegraphics[width=0.48\textwidth]{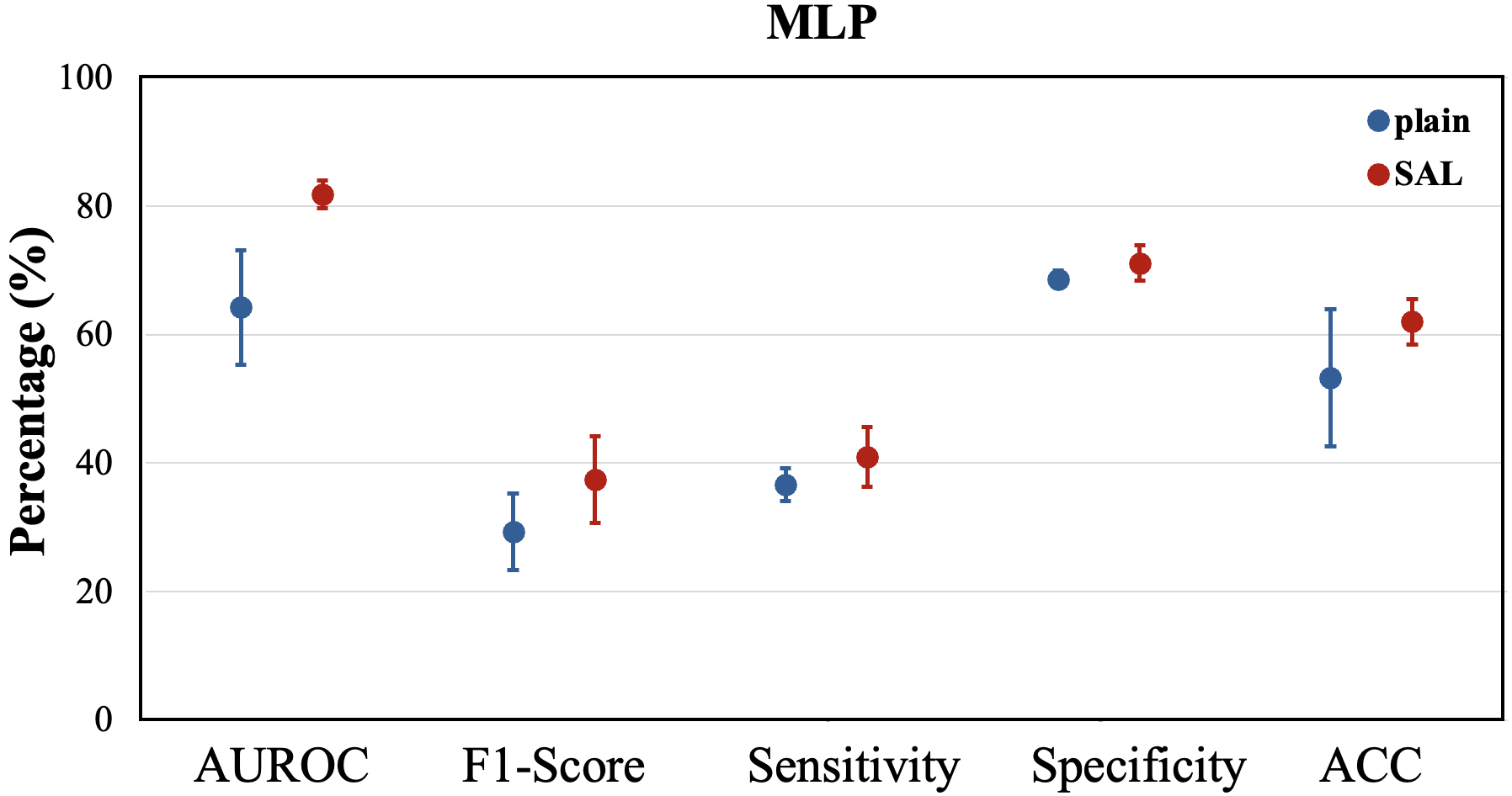}}
    \hfill
    \subfloat[]{\includegraphics[width=0.48\textwidth]{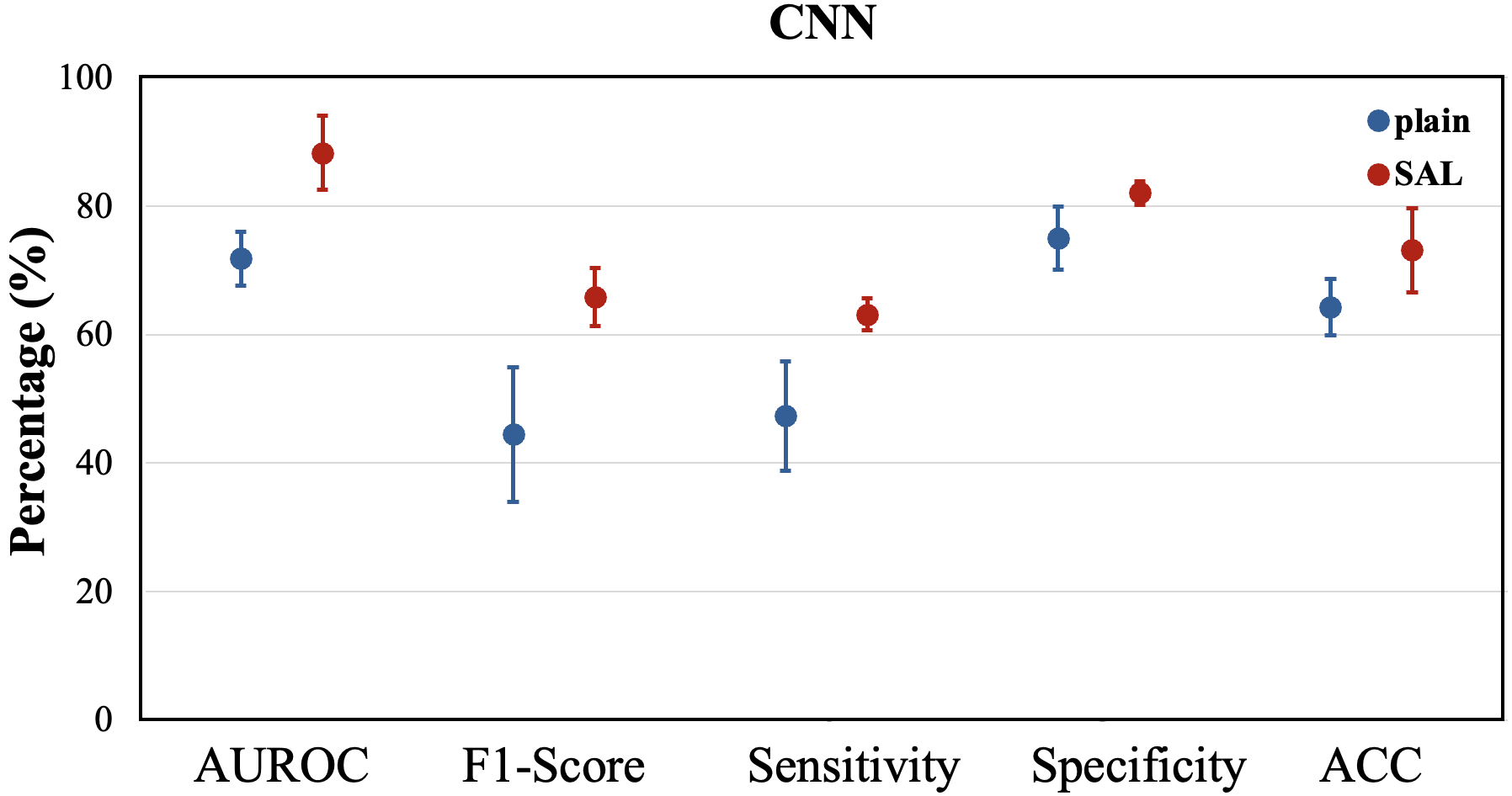}}
    \caption{Performance comparison of SAL versus plain baseline on the BUSI dataset using MLP (left) and CNN (right) in terms of AUROC, F1-Score, Sensitivity, Specificity, and ACC. Points and error bars denote the mean $\pm$ standard deviation from five independent runs.}
    \label{fig:BUSI_Results}
\end{figure*}

\section{Conclusion and Future Work}
In this paper, we proposed Self-Abstraction Learning, a hierarchical approach to improve the efficacy and stability of training deep neural networks. By refining models progressively from simple to complex structures, SAL tackles critical challenges like gradient vanishing and overfitting while optimizing performance. Our experiments across various architectures and datasets show that SAL exceeds conventional training method, delivering effective optimization, reduced overfitting, and improved generalization—especially in data-scarce settings. In conclusion, our approach achieves a more favorable bias-variance trade-off, successfully mitigating the high variance of deep models while preserving their low bias. These findings suggest the potential of SAL as a stable training 
solution. Nevertheless, SAL incurs additional training overhead compared to 
plain baselines, as the hierarchical multi-floor structure requires recursively 
training multiple models per step.
 
Future research on SAL can explore several promising directions. One key area is applying SAL to advanced architectures like Transformers \cite{vaswani2017attention} to broaden its use in tasks such as natural language processing and image recognition \cite{dosovitskiy2020image}. Furthermore, refining the guidance mechanism represents a critical avenue for improvement. While the current SAL implementation relies on direct element-wise feature matching, this approach may impose overly rigid constraints on the lower floor model. Future work could adopt more sophisticated feature transfer techniques, such as relation-based \cite{park2019relational} or attention-based \cite{zagoruyko2016paying} mechanisms, to enable flexible structural adaptation without strict value alignment.
Integrating Knowledge Distillation \cite{hinton2015distilling} into the SAL framework could effectively boost model performance, offering a synergy between our `simple-to-complex' Knowledge Guidance and `complex-to-simple' Knowledge Distillation. Investigating residual SAL variants, where the target is defined as the residual difference between the ground truth and the upper floor's output, may improve efficiency for real-time applications. This approach effectively exploits the speed-accuracy trade-off, which is particularly critical in industrial automation \cite{wang2018deep}. These extensions aim to further strengthen the potential of SAL in deep learning and reduce the total wall-clock training time required for convergence.

\appendices
\section{\\ Proof of Theorem 4.1}
\label{app:proof_of_theorem_4.1}

\begin{lemma}
\label{app:lemma}
For an input logit vector $\mathbf{z} \in \mathbb{R}^n$ and the corresponding target vector $\mathbf{y} \in [0,1]^n$, the CE loss function combined with the softmax function is Lipschitz continuous. 
\end{lemma}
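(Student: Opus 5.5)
The plan is to write the loss explicitly as a function of the logits and bound its gradient uniformly. With the softmax $p_i(\mathbf{z}) = e^{z_i}/\sum_{j} e^{z_j}$, the CE loss combined with softmax is
\begin{equation*}
l(\mathbf{z},\mathbf{y}) = -\sum_{i=1}^{n} y_i \log p_i(\mathbf{z}) = \Big(\sum_{i=1}^n y_i\Big)\log\sum_{j=1}^n e^{z_j} - \sum_{i=1}^n y_i z_i .
\end{equation*}
This expression is smooth in $\mathbf{z}$ for fixed $\mathbf{y}$. Writing $s = \sum_i y_i \in [0,n]$, a direct differentiation gives
\begin{equation*}
\nabla_{\mathbf{z}}\, l(\mathbf{z},\mathbf{y}) = s\,\mathbf{p}(\mathbf{z}) - \mathbf{y}.
\end{equation*}
In the usual case where $\mathbf{y}$ is a probability vector, $s=1$ and this reduces to the familiar $\mathbf{p}-\mathbf{y}$.

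The key step is a uniform bound on this gradient over all $\mathbf{z}\in\mathbb{R}^n$. Since $\mathbf{p}(\mathbf{z})$ lies in the probability simplex, $\|\mathbf{p}\|_2 \le \|\mathbf{p}\|_1 = 1$. Since $\mathbf{y}\in[0,1]^n$, $\|\mathbf{y}\|_2 \le \sqrt{n}$. The triangle inequality then gives
\begin{equation*}
\|\nabla_{\mathbf{z}} l\|_2 \le s + \sqrt{n} \le n+\sqrt{n}.
\end{equation*}
When $\mathbf{y}$ is a probability vector, this tightens to $\|\mathbf{p}-\mathbf{y}\|_2 \le \|\mathbf{p}\|_2+\|\mathbf{y}\|_2 \le 2$, or even $\sqrt{2}$, because both vectors lie in the simplex.

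Finally, apply the mean value theorem along the segment between two logit vectors $\mathbf{z}_1,\mathbf{z}_2$:
\begin{equation*}
|l(\mathbf{z}_1,\mathbf{y}) - l(\mathbf{z}_2,\mathbf{y})| \le \sup_{\mathbf{z}} \|\nabla_{\mathbf{z}} l\|_2 \,\|\mathbf{z}_1-\mathbf{z}_2\|_2 .
\end{equation*}
This yields a Lipschitz constant $L$ in the sense needed for Theorem~\ref{theorem1}, since that proof compares $l(f(x),y)$ and $l(h(x),y)$ with $f(x),h(x)$ playing the role of logits. The theorem then follows with $C=L$: take expectations of $l(f(x),y) \le l(h(x),y) + L\|f(x)-h(x)\|_2$.

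The only delicate point is that the supremum runs over the unbounded domain $\mathbb{R}^n$, so we must use that the softmax output is always bounded rather than any compactness argument. The gradient identity makes this immediate. I also need to state clearly that the constant depends on $n$, and is $2$ when targets are probability vectors; the case where $\mathbf{y}$ is not normalized only enlarges the constant through the factor $s\le n$.
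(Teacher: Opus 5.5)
Your proof is correct and follows the paper's route: the paper likewise computes $\partial L/\partial z_k = -y_k + \bigl(\sum_i y_i\bigr) a_k$, bounds the gradient uniformly using that softmax outputs and targets lie in $[0,1]$ (it gets $n\sqrt{n}$), and concludes with the mean value theorem and Cauchy--Schwarz, adding a separate $\sqrt{2}$ refinement for one-hot targets. Your triangle inequality on the vector $s\mathbf{p}-\mathbf{y}$ is cleaner than the paper's step, which squares the one-sided estimate $\partial L/\partial z_k \le n a_k - y_k$, and your simplex-diameter observation extends the $\sqrt{2}$ constant from one-hot targets to all probability-vector targets.
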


\textit{Proof.}
Let the activation be $\mathbf{a} = s(\mathbf{z})$. The loss function is defined as follows:
\begin{equation*}
L(\mathbf{z}) = -\sum_{i=1}^n y_i\log a_i.
\end{equation*}
Then, the partial derivative with respect to each component $z_k$ of $\mathbf{z}$ is
\begin{equation*}
\frac{\partial L}{\partial z_k} = -y_k + \sum_{i=1}^n y_i a_k \leq -y_k + n a_k.
\end{equation*}
We now find the upper bound for the squared $\ell_2$ norm of this gradient. Since each component $a_i$ and $y_i$ is bounded in $[0,1]$, the following inequality holds.
\begin{equation*}
\lVert\nabla L(\mathbf{z})\rVert_2^2 \leq \sum_{k=1}^n (n a_k - y_k)^2 \leq n^3.
\end{equation*}
Hence,
\begin{equation*}
\lVert\nabla L(\mathbf{z})\rVert_2 \le n\sqrt{n}.
\end{equation*}
Applying the mean value theorem and the Cauchy-Schwarz inequality, for any $\mathbf{z_1} \in \mathbb{R}^n$ and $\mathbf{z_2} \in \mathbb{R}^n$, there exists $\mathbf{c} \in \mathbb{R}^n$ such that
\begin{equation*}
|L(\mathbf{z_1}) - L(\mathbf{z_2})| = |\nabla L(\mathbf{c})^T (\mathbf{z_1} - \mathbf{z_2})| \le n\sqrt{n} \, \lVert\mathbf{z_1} - \mathbf{z_2}\rVert_2.
\end{equation*}
Thus, the loss function $L$ is Lipschitz continuous.
\hfill $\Box$ 

Consider the case where $\mathbf{y}$ is a one-hot vector corresponding to the true class $j$. Based on the derivative derived in Lemma \ref{app:lemma}, the gradient is simplified as follows:
\begin{equation*}
\frac{\partial L}{\partial z_k} = a_k - y_k.
\end{equation*}
Hence, the squared $\ell_2$ norm of the gradient is
\begin{align}
\lVert\nabla L(\mathbf{z})\rVert_2^2
&= \sum_{k=1}^n(a_k-y_k)^2 \\
&= (a_j-1)^2 + \sum_{k\ne j}(a_k-0)^2 \\
&= 1 - 2a_j + \sum_{k=1}^n a_k^2.
\end{align}
Since $\mathbf{a}$ is a probability vector from the softmax function, we obtain the following inequality.
\begin{align}
\lVert\nabla L(\mathbf{z})\rVert_2^2
&\leq 2-2a_j \\
&\le 2.
\end{align}
Consequently, when the target $\mathbf{y}$ is a one-hot vector, the Lipschitz constant is tightened to $\sqrt{2}$.
{
\renewcommand{\thetheorem}{4.1}
\begin{theorem}[\textbf{Generalization Error Bound}]
Let $f \in \mathcal{F}$ and $h \in \mathcal{H}$ be functions satisfying $\mathbb{E}[\|f(x) - h(x)\|_2] < \epsilon$ for a small $\epsilon > 0$.
There exists a constant $C > 0$ such that
\begin{equation*}
R(f) \le R(h) + C \cdot \epsilon.
\end{equation*}
\end{theorem}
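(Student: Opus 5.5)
The plan is to compare the two risks pointwise and then take expectations, using the Lipschitz property of the softmax-CE loss from Lemma~\ref{app:lemma}. Throughout, $f(x)$ and $h(x)$ are the logit vectors in $\mathbb{R}^n$ fed into the loss $l$. By linearity of expectation,
\begin{equation*}
R(f) - R(h) = \mathbb{E}_{(x,y)\sim\mathcal{D}}\big[l(f(x),y) - l(h(x),y)\big].
\end{equation*}
It therefore suffices to bound the integrand by a constant multiple of $\|f(x)-h(x)\|_2$, uniformly in $(x,y)$.

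For each fixed target $y \in [0,1]^n$, Lemma~\ref{app:lemma} shows that $\mathbf{z}\mapsto l(\mathbf{z},y)$ is Lipschitz with constant $n\sqrt{n}$. This constant does not depend on $y$, because the gradient bound used only $a_k, y_k \in [0,1]$. Applying the lemma with $\mathbf{z_1}=f(x)$ and $\mathbf{z_2}=h(x)$ gives, for every $(x,y)$,
\begin{equation*}
l(f(x),y) - l(h(x),y) \le \big|l(f(x),y) - l(h(x),y)\big| \le n\sqrt{n}\,\|f(x)-h(x)\|_2 .
\end{equation*}

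Taking expectations and using the hypothesis $\mathbb{E}[\|f(x)-h(x)\|_2]<\epsilon$ gives
\begin{equation*}
R(f) \le R(h) + n\sqrt{n}\,\mathbb{E}\big[\|f(x)-h(x)\|_2\big] \le R(h) + C\epsilon ,
\end{equation*}
with $C = n\sqrt{n}$. In the one-hot case, the refinement following Lemma~\ref{app:lemma} allows $C=\sqrt{2}$.

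The main point of care is that the Lipschitz constant must be uniform in $y$, so that it can be pulled out of the expectation over $\mathcal{D}$. The lemma's bound has this property. A second technical point is that the expectations must be finite for the subtraction of risks to make sense. If $R(h)=\infty$ the inequality is trivial. Otherwise $l(f(x),y)\le l(h(x),y)+C\|f(x)-h(x)\|_2$ is integrable, so $R(f)$ is finite. One caveat on the lemma itself: its gradient bound $\partial L/\partial z_k \le n a_k - y_k$ only controls the derivative from above. This can be tightened by writing $\partial L/\partial z_k = a_k\sum_i y_i - y_k$, whose absolute value is at most $\max(n,1)\le n$. That estimate confirms $\|\nabla L\|_2\le n\sqrt{n}$, so the constant $C$ above is valid.
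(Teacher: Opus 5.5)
Your proposal is correct and follows the paper's proof essentially step for step: you derive a pointwise bound $l(f(x),y)\le l(h(x),y)+C\|f(x)-h(x)\|_2$ from the Lipschitz lemma, then take expectations over $\mathcal{D}$ and invoke $\mathbb{E}[\|f(x)-h(x)\|_2]<\epsilon$. Your extra remarks are valid refinements that the paper leaves implicit: the explicit constant $C=n\sqrt{n}$ uniform in $y$, the finiteness of the risks, and the two-sided bound $|a_k\sum_i y_i - y_k|\le n$, which repairs the lemma's one-sided gradient estimate.
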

}
\textit{Proof.} 
For each model's output $f(x)$ and $h(x)$ about the input data $x$ and the CE loss function $l$,
\begin{equation*}
l(f(x), y) \le l(h(x), y) + |l(f(x), y) - l(h(x), y)|.
\end{equation*}
Note that $y$ corresponds to $x$.
By Lemma \ref{app:lemma}, there exists $C > 0$ satisfying
\begin{equation*}
l(f(x), y) \le l(h(x), y) + C\lVert f(x) - h(x)\rVert_2.
\end{equation*}
By taking the expectation over pairs $(x, y) \sim \mathcal{D}$,
\begin{equation*}
\mathbb{E}[l(f(x), y)] \le \mathbb{E}[l(h(x), y)] + \mathbb{E}[C\lVert f(x)-h(x)\rVert_2].
\end{equation*}
And using the definition of generalization error $R(f) = \mathbb{E}[l(f(x), y)]$, the above inequality is expressed as
\begin{equation*}
R(f) \le R(h) + C \cdot \mathbb{E}[\lVert f(x)-h(x)\rVert_2].
\end{equation*}

By assuming $\mathbb{E}[\lVert f(x) - h(x)\rVert_2] < \epsilon$, we obtain $R(f) \le R(h) + C \cdot \epsilon$.
\hfill $\Box$

\section{\\ Discussion on Sample Size $m$}
\label{app:discussion on sample size}

\begin{theorem}
    Let $\overline{X}_m =\frac{1}{m}\sum_{i=1}^m \|f(x_i) - h(x_i)\|_2$, and let $\mu = \mathbb{E}[\|f(x) - h(x)\|_2]$ for small $\epsilon$ such that $\mu < \epsilon$. To guarantee $|\hat{R}_S(f) - \hat{R}_S(h)| < C \cdot \epsilon$ with probability at least $1-\delta$, the sample size $m$ must satisfy
    \begin{equation*}
    m \ge \frac{n \ln(1/\delta)}{2(\epsilon - \mu)^2}.
    \end{equation*}
\end{theorem}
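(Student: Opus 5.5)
The plan is to reduce the statement to a one-sided concentration bound for the empirical mean $\overline{X}_m$ and then apply Hoeffding's inequality. First I reuse the pointwise estimate from the proof of Theorem~4.1. By Lemma~\ref{app:lemma}, for each sample point
\begin{equation*}
|l(f(x_i),y_i) - l(h(x_i),y_i)| \le C\,\|f(x_i)-h(x_i)\|_2 .
\end{equation*}
Averaging over $i=1,\dots,m$ and applying the triangle inequality gives the deterministic bound
\begin{equation*}
|\hat{R}_S(f) - \hat{R}_S(h)| \le \frac{1}{m}\sum_{i=1}^m |l(f(x_i),y_i)-l(h(x_i),y_i)| \le C\,\overline{X}_m .
\end{equation*}
It therefore suffices to show that $\Pr[\overline{X}_m \ge \epsilon] \le \delta$ under the stated condition on $m$.

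Next I would treat $X_i = \|f(x_i)-h(x_i)\|_2$ as i.i.d.\ random variables with mean $\mu$, since $S$ is drawn i.i.d.\ from $\mathcal{D}$. To use Hoeffding's inequality I need an almost-sure range for $X_i$. Here I will assume, in line with the setup $\mathcal{Y}=[0,1]^n$, that both models' outputs compared in the guidance phase lie in $[0,1]^n$. Then each coordinate difference lies in $[-1,1]$, but more usefully $X_i \in [0,\sqrt{n}]$, so the squared range is $(b-a)^2 = n$. This is exactly the source of the factor $n$ in the claimed bound.

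With $t = \epsilon - \mu > 0$, the one-sided Hoeffding inequality gives
\begin{equation*}
\Pr[\overline{X}_m \ge \epsilon] = \Pr[\overline{X}_m - \mu \ge \epsilon-\mu] \le \exp\!\left(-\frac{2m(\epsilon-\mu)^2}{n}\right).
\end{equation*}
Requiring the right-hand side to be at most $\delta$ and solving for $m$ yields $m \ge n\ln(1/\delta)/\bigl(2(\epsilon-\mu)^2\bigr)$. On the complementary event, which has probability at least $1-\delta$, we have $\overline{X}_m < \epsilon$, and hence $|\hat{R}_S(f)-\hat{R}_S(h)| \le C\,\overline{X}_m < C\epsilon$ by the first step.

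The main obstacle is justifying the boundedness $X_i \le \sqrt{n}$ while still applying Lemma~\ref{app:lemma}. The lemma is stated for logits $\mathbf{z}\in\mathbb{R}^n$, which are unbounded, whereas Hoeffding needs bounded outputs. I would handle this in one of two ways.
\begin{enumerate}
\item Interpret $f(x),h(x)$ as outputs in $[0,1]^n$, for example post-activation values as stored in the guidance phase. The Lipschitz step then only needs the loss to be Lipschitz on this bounded region, which holds with some constant $C$ because the gradient bound in the lemma is uniform.
\item Alternatively, simply state boundedness of the outputs as the standing assumption implicit in $\mathcal{Y}=[0,1]^n$.
\end{enumerate}
Either way, the constant $C$ is the same one used in Theorem~4.1.
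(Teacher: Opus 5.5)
Your proposal is correct and follows the paper's proof step for step: the Lipschitz bound from Lemma~\ref{app:lemma} reduces the claim to $\Pr[\overline{X}_m \ge \epsilon] \le \delta$, the range $[0,\sqrt{n}]$ of $\|f(x)-h(x)\|_2$ feeds the one-sided Hoeffding bound $\exp\bigl(-2m(\epsilon-\mu)^2/n\bigr)$, and solving for $m$ gives the stated threshold. The boundedness of the outputs in $[0,1]^n$ that you flag is exactly what the paper assumes without comment when it asserts that the maximum of $\|f(x)-h(x)\|_2$ is $\sqrt{n}$; also, like the paper, you show the condition on $m$ is sufficient, not necessary as the statement's ``must satisfy'' suggests.
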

\textit{Proof.} From the definition of empirical loss, we easily obtain the following inequality.
\begin{equation*}
|\hat{R}_S(f) - \hat{R}_S(h)| \le \frac{1}{m} \sum_{i=1}^m |l(f(x_i), y_i) - l(h(x_i), y_i)|.
\end{equation*}
By Lemma \ref{app:lemma}, the loss function $l$ is Lipschitz continuous. Thus, we obtain the following upper bound:
\begin{equation*}
\frac{1}{m} \sum_{i=1}^m \lVert C \cdot (f(x_i) - h(x_i))\rVert = C \cdot \overline{X}_m.
\end{equation*}
Now, to show $|\hat{R}_S(f) - \hat{R}_S(h)| < C \cdot \epsilon$, it is sufficient to show $\overline{X}_m < \epsilon$.
We aim to find the lower bound of $m$ such that $P(\overline{X}_m \ge \epsilon) \le \delta$. Subtracting the mean $\mu$ from both sides, this is equivalent to the following inequality:
\begin{equation}
P(\overline{X}_m - \mu \ge \epsilon - \mu) \le \delta.
\end{equation}
For the random variable $\|f(x) - h(x)\|_2$, the maximum value is $\sqrt{n}$ and the minimum is $0$. Applying Hoeffding's inequality for bounded random variables, we obtain
\begin{equation}
P(\overline{X}_m - \mu \ge \epsilon - \mu)
\le \exp\left(-\frac{2m(\epsilon - \mu)^2}{n}\right).
\end{equation}
To ensure this probability is at most $\delta$, we set the upper bound to $\delta$:
\begin{equation*}
\exp\left(-\frac{2m(\epsilon - \mu)^2}{n}\right) \le \delta.
\end{equation*}
Solving for m yields the result:
\begin{equation*}
m \ge \frac{n \ln(1/\delta)}{2(\epsilon - \mu)^2}.
\end{equation*}
\hfill $\Box$

\section{\\ Model Configurations for Each Floor}
\label{app:model_configs}

{\footnotesize
Table~\ref{tab:model_configs} summarizes the floor-wise configurations (F=3) used to obtain the results in Tables~\ref{tab:SAL_Performance}--\ref{tab:hyperparam_floor}. Floor~3 is the topmost guiding model and Floor~1 matches the plain baseline. For MLP and LSTM, hidden layers in the same floor share one width. For CNN, channels follow $\lfloor W/d \rfloor \cdot 2^{\lfloor i/2 \rfloor}$, where $W$ is width, $d$ is depth, and $i$ is the layer index.
\par}
\vspace{-0.75\baselineskip}

\begin{table}[H]
\caption{Floor-wise architectural configurations.}
\label{tab:model_configs}
\begin{center}
\tiny
\setlength{\tabcolsep}{2pt}
\renewcommand{\arraystretch}{0.82}
\resizebox{0.9\columnwidth}{!}{\begin{tabular}{llccc}
\toprule
\textbf{Model} & \textbf{Dataset} & \textbf{Floor} & \textbf{Depth} & \textbf{Width} \\
\midrule
MLP & MNIST, CIFAR10        & 3 & 2 & 64  \\
    &                        & 2 & 4 & 128 \\
    &                        & 1 & 8 & 256 \\
\cmidrule{2-5}
    & CIFAR100, IMAGENETTE  & 3 & 2 & 128 \\
    &                        & 2 & 4 & 256 \\
    &                        & 1 & 8 & 512 \\
\midrule
CNN & MNIST, CIFAR10,        & 3 & 2 & 128 \\
    & CIFAR100, IMAGENETTE   & 2 & 4 & 256 \\
    &                        & 1 & 8 & 512 \\
\midrule
LSTM & TREC, TREC\_FINE     & 3 & 2 & 16  \\
     &                       & 2 & 4 & 32  \\
     &                       & 1 & 8 & 64  \\
\cmidrule{2-5}
     & CARER                 & 3 & 2 & 32  \\
     &                       & 2 & 4 & 64  \\
     &                       & 1 & 8 & 128 \\
\bottomrule
\end{tabular}}
\end{center}
\end{table}

\bibliographystyle{IEEEtran}
\bibliography{sal_references}

\begin{IEEEbiography}
[{\includegraphics[width=1in,height=1.25in,clip,keepaspectratio]{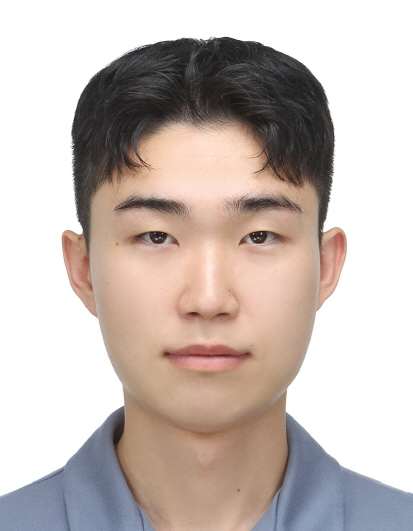}}]{Wonyong Cho} is currently pursuing the B.S. degree in mathematics and artificial intelligence with the University of Seoul, Seoul, South Korea. His research interests include stable training of deep neural networks, AI security, Multi Object Tracking (MOT).
\end{IEEEbiography}

\begin{IEEEbiography}
[{\includegraphics[width=1in,height=1.25in,clip,keepaspectratio]{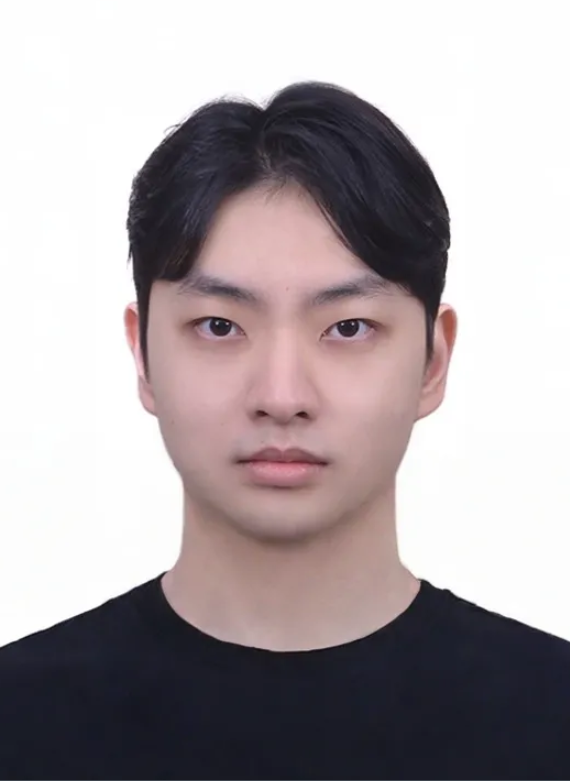}}]{Taemin Kim}
is currently pursuing the B.S. degree in the Division of IT Convergence Engineering at Hansung University, Seoul, Republic of Korea.
His primary research interests are centered on LLMs, particularly in areas of inference optimization, scheduling, and batching. He is also interested in Multi-Modal Learning, Human-Computer Interaction (HCI), and the efficient deployment of on-device AI utilizing NPUs.
\end{IEEEbiography}

\begin{IEEEbiography}[{\includegraphics[width=1in,height=1.25in,clip,keepaspectratio]{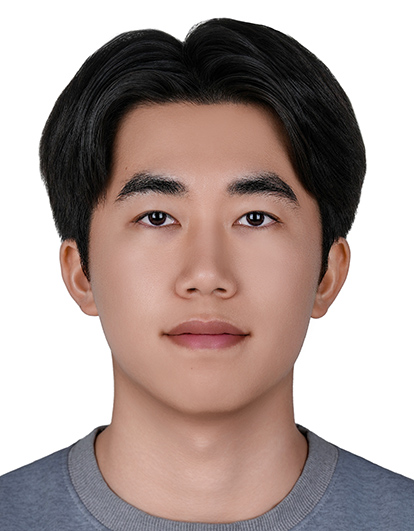}}]{Jungmin Kim} is currently pursuing the B.S. degree in mathematics and artificial intelligence with the University of Seoul, Seoul, South Korea. His research interests include Deep Learning, Artificial Intelligence, Physiological Signal Processing, Medical Image Analysis.
\end{IEEEbiography}

\begin{IEEEbiography}[{\includegraphics[width=1in,height=1.25in,clip,keepaspectratio]{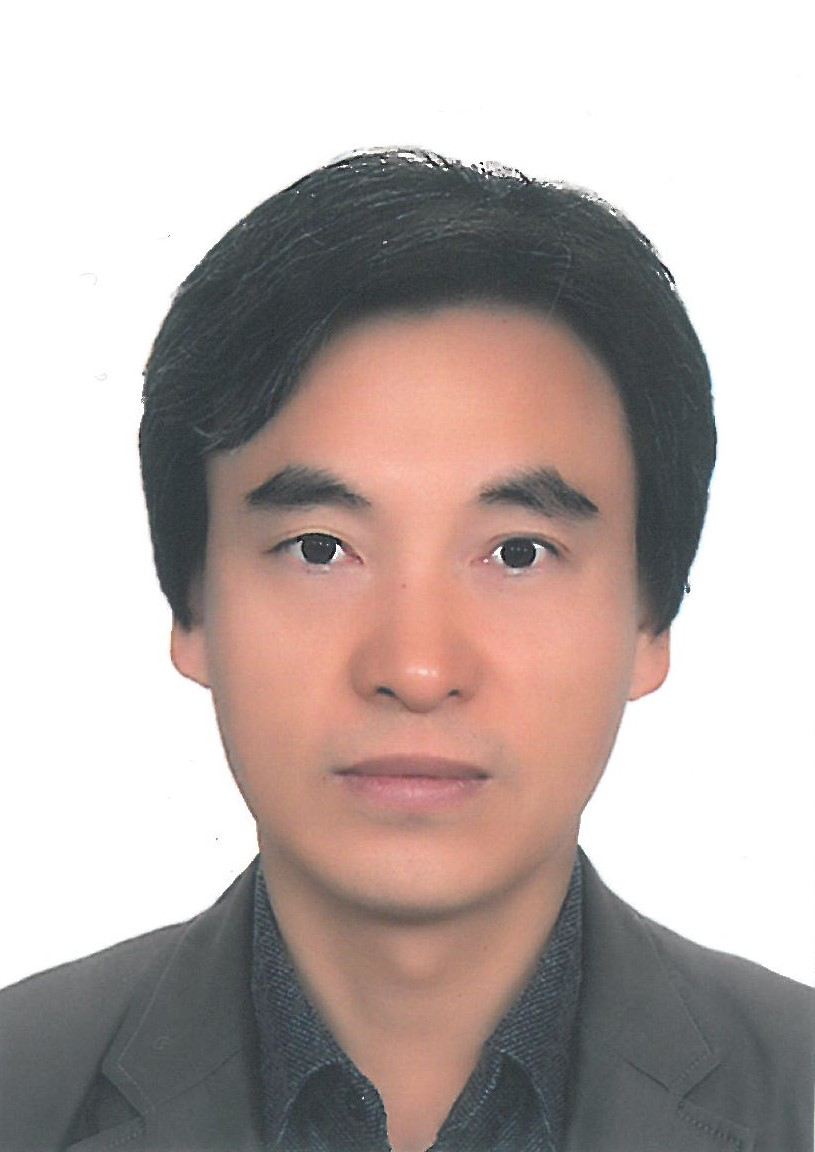}}]{Jeong-Rae Kim}
received the MS and PhD degrees in mathematics from Seoul National University, South Korea, in 1997 and 2004, respectively. From 2005 to 2007, he was with the Bio
Max Institute at Seoul National University as a senior researcher. From 2007 to 2010, he was with the Department of Bio and Brain Engineering at the Korea Advanced Institute of Science and Technology (KAIST) as a research professor. Since 2010, he has been a professor in the Department of Mathematics, University of Seoul. His research interests include systems biology, bioinformatics, numerical analysis, and machine learning.
\end{IEEEbiography}

\begin{IEEEbiography}[{\includegraphics[width=1in,height=1.25in,clip,keepaspectratio]{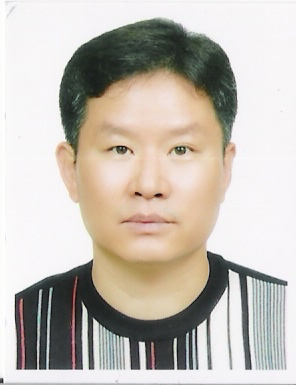}}]{Sung Hoon Jung}
received the B.S. degree in electronics engineering from Hanyang University ERICA Campus, Ansan, Republic of Korea, in 1988, and the M.S. and Ph.D. degrees in electrical engineering from Korea Advanced Institute of Science and Technology (KAIST), Daejeon, Republic of Korea, in 1991 and 1995, respectively. From 1995 to 1996, he was a Research Fellow with KAIST. In 1996, he joined the Department of Information and Communication Engineering, Hansung University, Seoul, where he became a Full Professor, in 2007. From 2009 to 2010, he was a Visiting Professor with the Department of Bio and Brain Engineering, KAIST. Since 2021, he has been with the Department of Applied Artificial Intelligence, Hansung University. His research interests include artificial intelligence, machine learning, and their applications in engineering and technology, particularly in cultural and medical domains.
\end{IEEEbiography}

\EOD
\end{document}